\documentclass[letterpaper]{article} 
\usepackage[preprint]{aaai2027}  
\usepackage[hyphens]{url}  
\usepackage{graphicx} 
\usepackage{natbib}  
\usepackage{caption} 
\usepackage{algorithm}
\usepackage{algorithmic}
\usepackage{amsmath,amssymb,amsfonts}
\usepackage{multirow}
\usepackage{makecell}
\usepackage{subcaption}
\usepackage{pdfpages}
\usepackage{pifont}

\usepackage{newfloat}
\usepackage{listings}
\DeclareCaptionStyle{ruled}{labelfont=normalfont,labelsep=colon,strut=off} 
\floatstyle{ruled}
\newfloat{listing}{tb}{lst}{}
\floatname{listing}{Listing}

\usepackage{booktabs}

\newtheorem{theorem}{Theorem}

\newenvironment{proof}{{\it Proof}.\ }{\hfill $\blacksquare$\par}

\definecolor{promptcolor}{HTML}{BFC1C0}
\usepackage[most]{tcolorbox}
\tcbuselibrary{skins, breakable}
\newtcolorbox{promptbox}[1]{
    breakable,
    colback=white,
    colframe=promptcolor,
    arc=4pt,
    boxrule=0.8pt,
    title=#1,
    coltitle=white,
    colbacktitle=promptcolor,
    fonttitle=\bfseries
}

\title{Learning What to Activate: Combinatorial Capability Allocation for Long-Horizon Multimodal Agents}

\author{
    Wenhao Yuan\textsuperscript{\rm 1}, Chenchen Lin\textsuperscript{\rm 1}, Jian Chen\textsuperscript{\rm 1}, Jinfeng Xu\textsuperscript{\rm 1}, Shuo Yang\textsuperscript{\rm 1}, Edith Cheuk-Han Ngai\textsuperscript{\rm 1}\corresponding
}

\affiliations{
    \textsuperscript{\rm 1}Department of Electrical and Computer Engineering, The University of Hong Kong

}

\begin{document}

\maketitle

\begin{abstract}
Long-horizon multimodal agents rely on specialized capabilities for perception, retrieval, reasoning, verification, and execution. Existing designs typically activate a fixed capability set or invoke a predefined workflow, incurring substantial computational overhead while failing to accommodate stage-dependent capability demands. In this paper, we study the \textit{combinatorial capability allocation} problem for long-horizon multimodal agent systems, where the system selects a cost-sensitive subset of specialized capabilities at each interaction stage, which is nontrivial since capability values depend on the selected subset, while previous allocations alter the states encountered by subsequent decisions. We introduce \textsc{CoCA}, an on-policy learning framework that recovers a deployable capability-subset policy from sparse conditional comparisons. On states visited by the student policy, the stronger teacher compares the marginal net values of candidate capabilities, conditioned on the currently selected subset. Then, we adopt a conditional utility model to transform such comparisons into an autoregressive capability-subset policy, avoiding explicit enumeration. We further introduce dual-level on-policy distillation to address distribution mismatch both across environment states and within the partial subsets encountered during set construction. Finally, trajectory-level reinforcement learning refines the distilled policy toward task success, activation cost, and allocation stability. At inference time, allocation is performed solely by the lightweight student policy without teacher queries or online updates. Experiments on long-horizon multimodal environments and controlled capability-demand shifts demonstrate the superiority of our method over the state-of-the-art baseline methods.
\end{abstract}


\section{Introduction}
Large Language Models (LLMs) are increasingly deployed as long-horizon multimodal agents that perceive, retrieve, reason, verify, and act over extended interactions~\citep{koh2024visualwebarena, tian2025mmina, xu2026evolution}. To support such heterogeneous demands, these systems maintain a library of specialized capabilities, each instantiated as a dedicated agent or tool and activated as tasks evolve~\citep{wu2024autogen, zhang2026evoroute}. However, prevailing designs either use a fixed capability set or follow a predefined workflow, overlooking that capability demands are \textit{stage-dependent} and shift with new user turns or visual observations~\citep{fan2025workflowllm, xu2025crab, zhang2026agentrouter}. We refer to this setting as \textit{combinatorial capability allocation}, where the system selects a cost-sensitive subset of capabilities at each interaction stage rather than committing to a static configuration. This problem is crucial in long-horizon settings since each allocation affects the observations and intermediate results to subsequent stages~\citep{xu2025theagentcompany}. An over-provisioned or misaligned selection not only incurs redundant activation costs but also propagates to future states, reshapes downstream capability requirements, and induces unstable, oscillatory allocation behavior across trajectories~\citep{ong2025routellm}.

In multimodal agent systems, existing methods primarily allocate capabilities through fixed capability sets~\citep{li2026organizing}, predefined workflows~\citep{fan2025workflowllm}, or model and tool routing~\citep{ong2025routellm, zhang2026mtrouter}. While simplifying system design, these methods typically assume fixed capability values. In practice, however, capability values are \textit{coupled}: within a stage, a capability's marginal net value depends on the co-activated capabilities; across stages, it depends on how prior allocations have reshaped the current state. Recent work has explored dynamic tool selection~\citep{liu2025toolace}, cost-aware routing~\citep{zhang2026mtrouter}, and adaptive agent composition~\citep{zhang2025agentorchestra}, yet such \textit{subset-valued} decisions remain underexplored~\citep{zheng2026skillselect}. This coupling also limits supervision: marginal net values vary with the selected subset and the policy-shaped state, while enumerating or labeling optimal subsets over the combinatorial action space is prohibitively costly~\citep{niu2026routing}. Existing strategies, such as imitating fixed expert pipelines or annotating stage-level subsets~\citep{xi2025agentgym}, therefore provide limited support for learning a deployable policy. Instead, supervision is more naturally expressed as \textit{relative}, condition-dependent preferences between candidate capabilities given the selected subset. This motivates recovering a subset policy from conditional comparisons while accounting for both intra-stage capability dependencies and inter-stage state coupling, raising a key question: \textit{How can a long-horizon multimodal agent decide which capabilities to activate at each stage under coupled capability values?}

To overcome these limitations, we propose \textbf{Co}nditional \textbf{C}apability \textbf{A}llocation (\textsc{CoCA}) Learning, an on-policy learning framework that recovers a deployable capability-subset policy from sparse conditional comparisons. To model intra-stage coupling, \textsc{CoCA} adopts a conditional pairwise utility model, where a stronger teacher compares the marginal net values of candidate capabilities given the currently selected subset. The learned utilities are then transformed into an autoregressive subset policy, avoiding explicit enumeration of the combinatorial action space. To address inter-stage coupling, \textsc{CoCA} introduces dual-level on-policy distillation that aligns the student with its deployment distribution over both environment states and partially constructed subsets through an annealed teacher--student mixture. Finally, trajectory-level reinforcement learning refines the distilled policy by propagating delayed rewards for task success, activation cost, and allocation stability to each include-or-skip decision, yielding a cost-effective and temporally stable policy. At inference time, allocation is performed solely by the lightweight student policy, without teacher queries or online updates. Our key contributions are summarized as follows:
\begin{itemize}
\item We identify the problem of combinatorial capability allocation in long-horizon multimodal agents, where subset-dependent capability values and policy-induced state shifts render static or single-choice activation inadequate.

\item We introduce \textsc{CoCA}, a novel framework that learns conditional pairwise utility, distills it into an autoregressive subset policy under dual-level distribution matching, and refines it via trajectory-level reinforcement learning.

\item We conduct extensive experiments on long-horizon multimodal environments, demonstrating the superiority of \textsc{CoCA} over state-of-the-art baselines.
\end{itemize}

\section{Related Work}

\subsection{Capability Allocation in Agentic Systems}
Multimodal agent systems increasingly assemble a library of specialized capabilities, each instantiated as a dedicated agent or tool, to cover the heterogeneous demands of long-horizon tasks~\citep{ruan2026aorchestra,yao2026ace}. Existing work allocates these capabilities through fixed candidate catalogs~\citep{wu2025agentic,zhao2025cola}, predefined workflows~\citep{wang2026always,wang2026fusionflow}, or model and tool routing that maps each query to a single capability~\citep{dekoninck2025unified,faghih2025tool}. To make allocation more responsive, recent methods explore dynamic tool selection~\citep{li2025adaptive,zhang2025toolexpnet}, cost-aware routing~\citep{ding2025best,xiang2026llm}, and adaptive agent composition or orchestration~\citep{liu2026evolving}. These works demonstrate that adapting which capabilities are invoked, rather than running a full pipeline, improves both efficiency and task performance~\citep{ding2025best,xiang2026llm}.

Despite these advances, most routing methods primarily model capability selection at the individual capability level, reducing allocation to factorized decisions~\citep{dekoninck2025unified, niu2026routing}. In long-horizon settings, however, capability values are \textit{coupled}: within a stage, marginal value depends on the co-activated capabilities~\citep{zhang2025toolexpnet}; across stages, prior allocations reshape the states governing subsequent decisions~\citep{jia2026autotool}. Although recent work has formulated agent routing as cost-aware set-valued prediction~\citep{nayan2026multi}, existing methods either select a single candidate or predict a subset in one shot, and thus cannot jointly capture intra-stage dependencies, inter-stage state coupling, and marginal values conditioned on partially selected subsets.

\subsection{On-Policy Learning from Preference Supervision}
A parallel line of work studies how to learn a deployable policy under distribution shift and weak supervision. On-policy imitation and distillation methods collect supervision on the states the student itself visits, mitigating the compounding errors that arise when a policy is trained only on expert trajectories~\citep{zhong2026sod,zhang2026fast}. To reduce the burden of dense action labels, preference-based learning instead elicits relative comparisons between candidates and fits a utility or reward model, most notably through Bradley--Terry style objectives~\citep{zhang2025beyond,cho2025policy}. These signals are then used to shape or refine policies, often combined with reinforcement learning to optimize delayed, trajectory-level outcomes~\citep{yang2026opid,wu2026seed}. Together, these works show that on-policy collection and relative preference supervision are effective when explicit optimal labels are unavailable or costly.

Despite recent progress, on-policy imitation typically assumes that the teacher provides a target action at each visited state. Here, however, the action is a capability \textit{subset}, and supervision consists only of sparse, condition-dependent pairwise comparisons, leaving unclear how to recover a well-defined subset policy. Preference-based methods further yield trajectory- or response-level scalar rewards rather than step-level distributions for dependency-aware set construction. Standard on-policy distillation also corrects only state-level shifts, overlooking mismatches in partially constructed subsets. Thus, existing methods cannot jointly recover an autoregressive subset policy from conditional comparisons and align the student across state and subset-prefix distributions.

\begin{figure*}[t]
\centering
\includegraphics[width=0.98\textwidth]{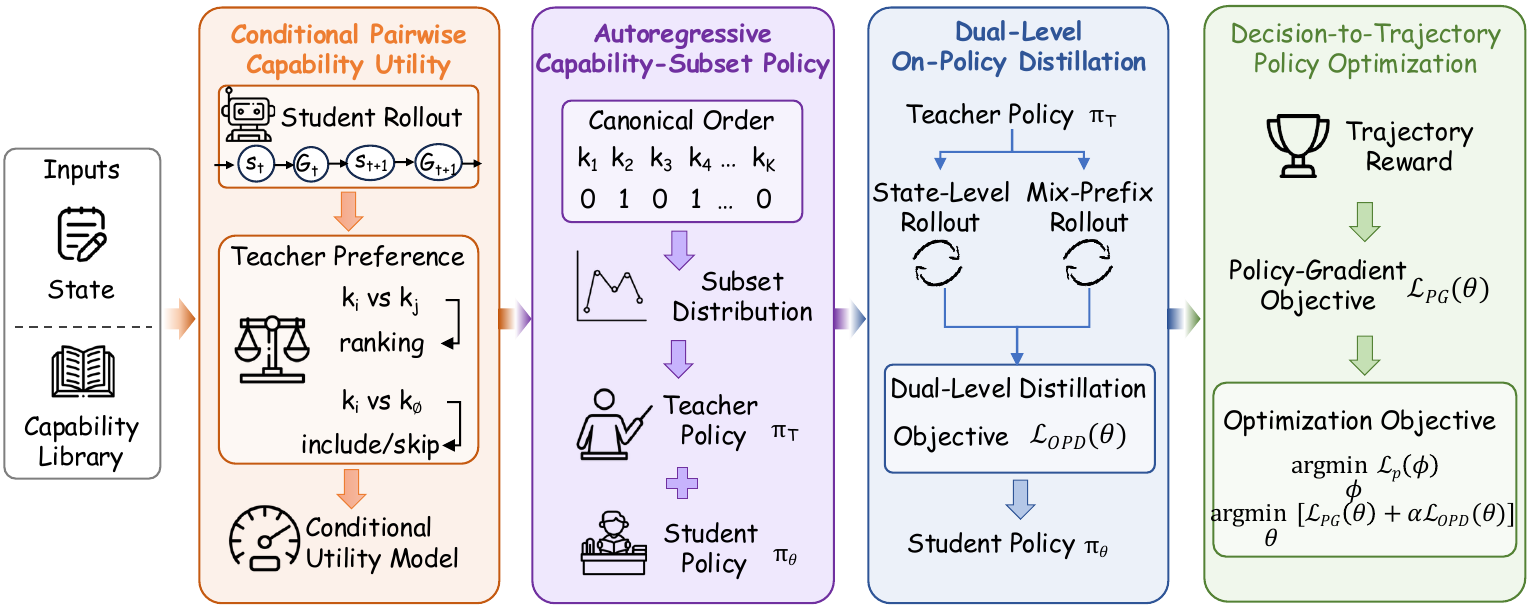}
\caption{An overview of the proposed \textsc{CoCA} framework. The diagram illustrates the learning pipeline from conditional preference modeling and teacher-guided policy construction to dual-level distillation and trajectory-level optimization.}
\label{framework}
\end{figure*}

\section{Methodology}

\subsection{Problem Formulation}
As illustrated in Figure~\ref{framework}, we consider a long-horizon multimodal agent system with a capability library $\mathcal{K}=\{k_1,\ldots, \\ k_K\}$, where each capability is instantiated by a specialized agent or tool providing a function. We divide task execution into interaction stages indexed by $t\in\{0,\ldots,T-1\}$. A new stage is triggered by information that may change capability requirements, such as a new user turn or visual observation, allowing capability allocation to adapt throughout execution.

At stage $t$, the system maintains a state $s_t=(x,z_t,G_{t-1})$, where $x$ denotes the task goal, $z_t$ the current context, and $G_{t-1}$ the previous capability set. Given $s_t$, the system selects an active subset $G_t\subseteq\mathcal{K}$, represented by a $K$-dimensional binary vector. The allocation decision determines capability selection, while execution follows a predefined dependency-aware order. The resulting capability outputs and observations update $z_{t+1}$ and form the next state $s_{t+1}$. Since $G_t$ affects future observations, capability allocation influences subsequent states and capability requirements.

Let $\pi_\theta(G_t\mid s_t)$ denote the allocation policy parameterized by $\theta$. For an execution trajectory $\xi=(s_0,G_0, \ldots,s_{T-1}, \\ G_{T-1},s_T)$, we define its cumulative activation cost as $\mathcal{C}_{A}(\xi)=\sum_{t=0}^{T-1}\sum_{k_i\in G_t}c_i$, where $c_i$ is a normalized capability-level cost proxy derived from inference overhead. Minimizing activation cost alone, however, may result in unstable allocation behavior, with capabilities being repeatedly activated and deactivated across adjacent stages. We further define the capability switch cost $\mathcal{C}_{S}(\xi)=\sum_{t=1}^{T-1}|G_t \triangle G_{t-1}|$, where $\triangle$ is the symmetric difference between two sets, and $|G_t\mathbin{\triangle}G_{t-1}|$ measures the number of capabilities whose activation status changes between consecutive stages. The objective is to maximize the task performance while controlling cost and allocation instability:
\begin{align}
\max_{\theta} \mathbb{E}_{\xi\sim\pi_\theta} \left[R_{\mathrm{task}}(\xi) -\lambda\mathcal{C}_{A}(\xi) -\mu\mathcal{C}_{S}(\xi) \right],
\label{eq:overall-objective}
\end{align}
where $R_{\mathrm{task}}(\xi)$ measures the task-level outcome of $\xi$, and $\lambda$ and $\mu$ control the trade-offs among task success, cost, and allocation stability. Optimizing this objective is challenging since existing benchmarks lack stage-level allocation labels, the policy itself shapes the future state distribution, and the subset-valued action creates dependencies among capabilities. These properties distinguish capability allocation from conventional static or single-choice routing.

\subsection{Conditional Pairwise Capability Utility} \label{sec:conditional-utility}
The capability allocation problem involves a combinatorial subset action space, where exhaustive supervision of optimal capability sets is costly. Moreover, capability utility is context-dependent, varying with the current state and previously selected capabilities. We therefore learn conditional capability utility from sparse pairwise comparisons rather than explicit subset labels.

For a state $s$ and an already-selected partial set $S\subseteq\mathcal{K}$, the teacher compares two candidate alternatives $a_i$ and $a_j$. We consider two types of comparisons: (\romannumeral1) $k_i \text{versus} k_j$; (\romannumeral2) $k_i \text{versus} k_{\varnothing}$, where $k_{\varnothing}$ denotes a null alternative corresponding to not activating the candidate capability. The first comparison captures relative preferences among capabilities, while the second determines whether adding $k_i$ provides positive marginal utility compared with leaving it unselected. To align teacher supervision with the deployment objective, we define capability preference according to cost-adjusted marginal utility. Specifically, the task-level marginal advantage and its cost-adjusted counterpart are defined as
\begin{align}
A_{k,\mathrm{task}}^\pi(s,S) &= Q_{\mathrm{task}}^\pi(s,S\cup\{k\}) - Q_{\mathrm{task}}^\pi(s,S), \\
U_{k,\lambda}^\pi(s,S) &= A_{k,\mathrm{task}}^\pi(s,S)-\lambda c_k,
\end{align}
with $U_{k_{\varnothing},\lambda}^\pi(s,S)=0$. The teacher evaluates candidates according to the cost-adjusted marginal utility criterion, and compares candidates based on the net-value criterion. Thus, the learned utility model $v_\phi$ captures the cost-adjusted marginal value of activating each capability. Each teacher query produces a preference label $y\in\{0,0.5,1\}$ and an optional weight $w\in(0,1]$. Specifically, $y=1$ indicates that $a_i$ is preferred, $y=0$ indicates that $a_j$ is preferred, and $y=0.5$ represents a tie. The resulting preference dataset is $\mathcal{D}_{\mathrm{p}} = \{(s,S,a_i,a_j,y,w)\}$. Then, we learn a conditional utility model $v_\phi(s,a\mid S)$ with a Bradley--Terry likelihood:
\begin{align}
P_\phi (a_i \succ a_j\mid s, S) \!=\! \sigma (v_\phi(s,a_i\mid S) - v_\phi(s,a_j\mid S)),
\end{align}
where $\sigma(\cdot)$ is the sigmoid function. Let $\Delta v_\phi = v_\phi(s,a_i\mid S)-v_\phi(s,a_j\mid S)$, the preference-learning objective is
\begin{align}
\!\!\! \mathcal{L}_{\mathrm{p}}(\phi) \!=\! -\mathbb{E}_{\mathcal{D}_{\mathrm{p}}} [w (y\log\sigma(\Delta v_\phi) \!+\! (1 \!-\! y)\log\sigma(\!-\! \Delta v_\phi))]. \!\!\! 
\label{preference_loss}
\end{align}

Conditioning on $S$ allows $v_\phi$ to capture capability dependencies by estimating context-dependent marginal utility. We anchor the null option with $v_\phi(s,k_{\varnothing}\mid S)\equiv0$ to resolve the utility shift ambiguity. Then, $v_\phi(s,k_i\mid S)>0$ directly indicates preference over skipping $k_i$, connecting pairwise utility estimation to autoregressive include-or-skip decisions. Since the target utility depends on $\pi$ through $Q_{\mathrm{task}}^\pi$, it evolves with the allocation policy, motivating preference collection on student-induced states and partial capability sets.

\subsection{Autoregressive Capability-Subset Policy} \label{sec:autoregressive-policy}
The learned utility scores individual capabilities, whereas the policy must generate a subset-valued action. We construct the subset distribution through a canonical autoregressive factorization. We assume a predefined dependency-aware execution protocol among capabilities and obtain an order $k_1\prec k_2\prec\cdots\prec k_K$ by linearizing this protocol. At stage $t$, the policy considers capabilities based on this order and makes a binary include-or-skip decision for each capability. Let $z_{t,i}\in\{0,1\}$ denote the decision for $k_i$, where $z_{t,i}=1$ indicates inclusion, and define the selected prefix before position $i$ as $S_{t,<i}=\{k_j\mid j<i,\;z_{t,j}=1\}$. The subset is $G_t=\{k_i\mid z_{t,i}=1\}$, whose probability is factorized as
\begin{align} \label{subset_factorization}
\pi(G_t\mid s_t)=\prod_{i=1}^{K}\pi(z_{t,i}\mid s_t,S_{t,<i}),
\end{align}
which assigns a unique probability to each subset under the canonical order. The policy makes exactly $K$ binary decisions without requiring additional stopping actions. Based on the conditional utility model $v_\phi$, we derive a utility-induced teacher distribution over each include-or-skip decision:
\begin{align} \label{teacher_policy}
\pi_T(z_{t,i}=1\mid s_t,S_{t,<i}) = \sigma(v_\phi(s_t,k_i\mid S_{t,<i})).
\end{align}

Substituting the teacher decision distribution into the autoregressive factorization in Eq.~\eqref{subset_factorization} yields the teacher subset policy. The teacher policy transforms sparse pairwise supervision into a dense conditional distribution over capability-selection decisions, which is then distilled into a lightweight student policy for deployment. The student follows the same canonical factorization and parameterizes the inclusion probability using the current state and selection history. Specifically, we encode the state and capability descriptions as $h_s=\operatorname{Enc}_{\mathrm{state}}(s_t)$, $e_i=\operatorname{Enc}_{\mathrm{cap}}(k_i)$. The selected prefix and previous-stage capability set are represented as $g_{t,<i} = \operatorname{Pool}(\{e_j\mid k_j\in S_{t,<i}\})$, $g_{\mathrm{prev}} = \operatorname{Pool}(\{e_j\mid k_j\in G_{t-1}\})$. Here, $g_{\mathrm{prev}}$ provides temporal context extracted from the previous-stage capability allocation already contained in $s_t$. The student inclusion probability is parameterized as
\begin{align}
\pi_\theta (z_{t,i}=1\mid s_t,S_{t,<i}) = \sigma (f_\theta(h_s,e_i,g_{t,<i},g_{\mathrm{prev}})),
\end{align}
where $g_{t,<i}$ captures within-stage capability dependencies. The resulting student subset distribution is
\begin{align}
\pi_\theta(G_t\mid s_t) = \prod_{i=1}^{K} \pi_\theta (z_{t,i}\mid s_t,S_{t,<i}).
\end{align}

Since the teacher and student share the same binary construction process, the teacher distribution can be directly distilled into the student policy at each capability decision.

\subsection{Dual-Level On-Policy Distillation} \label{sec:dual-opd}
The autoregressive capability allocation policy suffers from distribution shifts at both the state and prefix levels. At the state level, early allocation decisions alter future observations and intermediate results, causing student rollouts to visit states beyond teacher trajectories. At the prefix level, early include-or-skip decisions change the partial capability sets that condition later decisions. Therefore, teacher-only trajectories and prefixes cannot fully reflect the student deployment distribution. We address these shifts by collecting states from student rollouts $s_t\sim d^{\pi_\theta}$ and exposing the student to its own prefixes through a mixture rollout strategy. Specifically, we define $\pi_{\mathrm{mix}}^{(\eta)}(z_{t,i}\mid s_t,S_{t,<i})=\eta\pi_T(z_{t,i}\mid s_t,S_{t,<i})+(1-\eta)\pi_\theta(z_{t,i}\mid s_t,S_{t,<i})$, where $\eta$ is annealed from $1$ to $0$. Let $q_{\eta,i}(S_{t,<i}\mid s_t)$ denote the induced prefix distribution over the first $i-1$ decisions, and define the local distillation loss as $\ell_i(s_t,S_{t,<i})\!=\!D_{\mathrm{KL}}(\pi_T(z_{t,i}\mid s,S_{t,<i})\Vert\pi_\theta(z_{t,i}\mid s_t,S_{t,<i}))$. The distillation objective is defined as
\begin{align} \label{opd_loss}
\!\! \mathcal{L}_{\mathrm{D}}(\theta) \!=\! \mathbb{E}_{s_t\sim d^{\pi_\theta}}\!\! \left[\sum_{i=1}^{K} \mathbb{E}_{S_{t,<i}\sim q_{\eta,i}(\cdot\mid s_t)} [\ell_i(s_t,S_{t,<i})]\right]. \!\!
\end{align}

With $\eta$ close to $1$, the prefix distribution is dominated by teacher-generated selections, providing stable supervision. As $\eta$ decreases, the distribution gradually incorporates student-generated prefixes, reducing mismatch between training and deployment. When $\eta=1$, the prefix distribution is induced by the teacher policy ($q_{\eta, i}=d_i^{\pi_T}$), and the proposed objective reduces to the chain-rule decomposition of the forward subset-level KL divergence $D_{\mathrm{KL}} (\pi_T(G\mid s)\Vert\pi_\theta(G\mid s)) = \sum_{i=1}^{K} \mathbb{E}_{S_{t,<i} \sim d_i^{\pi_T}(\cdot\mid s)}[\ell_i(s_t,S_{t,<i})]$. For $\eta<1$, the objective replaces teacher-induced prefixes with mixed teacher--student prefixes, extending standard distillation to student-relevant decision contexts. State-level rollout and prefix-level mixing explicitly target distribution mismatch at both the trajectory and decision levels.

\subsection{Decision-to-Trajectory Policy Optimization} \label{joint_training}

While conditional distillation provides dense supervision for individual include-or-skip decisions, it does not directly optimize delayed task outcomes or trajectory-level allocation stability. We therefore further refine the student policy using the trajectory-level objective introduced in Eq.~\eqref{eq:overall-objective}. For a sampled trajectory $\xi$, the reward is defined as $R(\xi) = R_{\mathrm{task}}(\xi) - \lambda\mathcal{C}_{A}(\xi) - \mu\mathcal{C}_{S}(\xi)$. Using a variance-reduction baseline $b$, the policy-gradient objective is
\begin{align} \label{policy_gradient_loss}
\!\! \mathcal{L}_{\mathrm{PG}}(\theta) = -\mathbb{E}_{\xi\sim\pi_\theta} \left[(R(\xi)-b) \sum_{t=0}^{T-1} \log\pi_\theta(G_t|s_t) \right]. \!\!
\end{align}

Under the autoregressive capability-subset factorization, $\log\pi_\theta(G_t|s_t)=\sum_{i=1}^{K}\log\pi_\theta(z_{t,i}|s_t,S_{t,<i})$, allowing trajectory rewards to propagate to individual capability decisions. Activation cost serves different roles across learning stages: preference supervision provides a local cost-aware prior, while reinforcement learning optimizes the final task--cost trade-off. In contrast, switching cost depends on the entire allocation trajectory and is optimized only through the trajectory-level objective. The utility model and student policy are optimized with separate objectives:
\begin{align}
\phi^\star &= \arg\min_\phi\mathcal{L}_{\mathrm{p}}(\phi), \\
\theta^\star &= \arg\min_\theta \left[\mathcal{L}_{\mathrm{PG}}(\theta) + \alpha\mathcal{L}_{\mathrm{D}}(\theta)\right],
\end{align}
where $\alpha$ balances decision-level distillation and trajectory-level optimization. Specifically, $\mathcal{L}_{\mathrm{D}}$ provides capability-level supervision, whereas $\mathcal{L}_{\mathrm{PG}}$ optimizes delayed trajectory outcomes. Training follows an iterative on-policy procedure, where student trajectories update the preference model and teacher guidance for policy refinement. The utility model and student policy are optimized with separate schedules to stabilize their interaction under the evolving student-induced distribution. During inference, only the lightweight student policy $\pi_\theta$ performs autoregressive capability selection.


\section{Theoretical Analysis} \label{sec:theory}
In this section, we characterize error propagation from conditional utility learning to trajectory-level optimization. We analyze how sparse preference supervision affects teacher policy stability, how autoregressive distillation transfers policies under deployment prefixes, and how the resulting discrepancies influence long-horizon performance under distribution shifts. Proofs are provided in the supplementary material.

We characterize how conditional utility estimation errors affect the induced teacher policy. For a condition $c \!=\! (s,S)$, let $\mathcal{A}_c=(\mathcal{K}\setminus S)\cup\{k_{\varnothing}\}$ be the candidate set with the null option as reference. Fixing the continuation policy $\pi$ within an outer iteration, define the target utility as $u_c^\star(a):=U_{a,\lambda}^{\pi}(s,S)$ for non-null candidates and $u_c^\star(k_{\varnothing})=0$. Under Bradley--Terry preference consistency, $p_{ab}^\star(c)= \sigma(u_c^\star(a)- u_c^\star(b))$. Let $\widehat u_c$ be the learned utility vector, with residual $\rho_c=\|\nabla\mathcal{R}_c(\widehat u_c)\|_2$, $\mathcal{R}_c(u) = -\sum_{(a,b)\in\mathcal{E}_c} w_{ab}(c) [y_{ab}\log\sigma(u_a-u_b)+ (1-y_{ab})\log\sigma(u_b \\ -u_a)]$. The comparison structure is represented by the anchored Laplacian $L_c=\sum_{(a,b)\in\mathcal{E}_c}w_{ab}(c)b_{ab}b_{ab}^{\top}$, where $b_{ab}$ removes the null coordinate. Let $\gamma_c$ be the smallest eigenvalue of $L_c$, and assume that pairwise utility differences along the segment connecting $u_c^\star$ and $\widehat u_c$ are bounded as $|u_a-u_b|\leq B$, yielding $\omega_B=\sigma(B)(1-\sigma(B))>0$.
\begin{theorem} \label{thm_preference_subset_stability} 
For every condition $c$ satisfying the above assumptions, the utility estimation error satisfies $\|\widehat u_c-u_c^\star\|_2 \leq \frac{\rho_c}{\omega_B\gamma_c}$. Let $\pi^\star$ denote the oracle autoregressive subset policy induced by $u^\star$, and let $\pi_T$ denote the teacher policy induced by $\widehat u$. Define $\Delta_c(s,S)=\rho_{(s,S)}/(\omega_B\gamma_{(s,S)})$ and $\mathbb{E}^{\pi_T}_i[\cdot]:=\mathbb{E}_{S_{t,<i}\sim d_i^{\pi_T}(\cdot\mid s_t)}[\cdot]$. Then,
\begin{align} \label{eq:theory-preference-subset-bound}
\!\! D_{\mathrm{TV}} (\pi^\star(\cdot\mid s),\pi_T(\cdot\mid s)) \!\leq\! \frac{1}{4} \sum_{i=1}^{K} \mathbb{E}^{\pi_T}_i [\Delta_c(s_t,S_{t,<i})]. \!
\end{align}
\end{theorem}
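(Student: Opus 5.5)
The proof has two parts: a strong-convexity argument for the utility bound, then a telescoping (hybrid) argument over the autoregressive factorization for the total-variation bound.

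\textbf{Part 1: the utility error bound.} Fix a condition $c$. I view $\mathcal{R}_c$ as a function of the non-null coordinates, since the null coordinate is anchored at zero.

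First, I will argue that $u_c^\star$ is the population stationary point. Under Bradley--Terry consistency, the expected label is $\mathbb{E}[y_{ab}]=\sigma(u^\star_a-u^\star_b)$. Hence the expected gradient $\sum w_{ab}(\sigma(u_a-u_b)-y_{ab})b_{ab}$ vanishes at $u^\star_c$. I will interpret $\rho_c$ as the residual measured relative to this reference, i.e. $\nabla\mathcal{R}_c(u_c^\star)=0$ in the idealized consistent-label setting.

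Second, I will compute the Hessian:
\[
\nabla^2\mathcal{R}_c(u)=\sum w_{ab}\,\sigma'(u_a-u_b)\,b_{ab}b_{ab}^\top .
\]
Since $\sigma'(x)=\sigma(x)(1-\sigma(x))$ is decreasing in $|x|$, the bound $|u_a-u_b|\le B$ along the segment gives $\nabla^2\mathcal{R}_c\succeq \omega_B L_c\succeq \omega_B\gamma_c I$ there.

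Third, I apply the mean value theorem in integral form:
\[
\nabla\mathcal{R}_c(\widehat u_c)-\nabla\mathcal{R}_c(u^\star_c)=H(\widehat u_c-u^\star_c),\qquad H=\int_0^1\nabla^2\mathcal{R}_c\bigl(u^\star_c+\tau(\widehat u_c-u^\star_c)\bigr)\,d\tau\succeq\omega_B\gamma_c I .
\]
Taking norms yields $\omega_B\gamma_c\|\widehat u_c-u_c^\star\|_2\le\rho_c$.

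\textbf{Part 2: the total-variation bound.} Both $\pi^\star$ and $\pi_T$ factor as products of Bernoulli decisions along the canonical order, with inclusion probabilities $\sigma(u^\star_c(k_i))$ and $\sigma(\widehat u_c(k_i))$ at $c=(s,S_{<i})$. I use the standard hybrid/telescoping argument for product-form sequential distributions. Define a hybrid policy $\pi^{(i)}$ that uses $\pi_T$ for the first $i$ decisions and $\pi^\star$ for the rest. Then
\[
D_{\mathrm{TV}}(\pi^\star,\pi_T)\le\sum_i D_{\mathrm{TV}}(\pi^{(i-1)},\pi^{(i)}).
\]
Consecutive hybrids differ only in decision $i$, whose prefix is distributed according to $d_i^{\pi_T}$; the shared continuation policy cannot increase total variation. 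So each term equals
\[
\mathbb{E}^{\pi_T}_i\bigl[D_{\mathrm{TV}}(\mathrm{Bern}(\sigma(u^\star)),\mathrm{Bern}(\sigma(\widehat u)))\bigr]=\mathbb{E}^{\pi_T}_i\bigl|\sigma(u^\star_c(k_i))-\sigma(\widehat u_c(k_i))\bigr| .
\]

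Finally, $\sigma$ is $\tfrac14$-Lipschitz, so each term is at most
\[
\tfrac14\,|u^\star_c(k_i)-\widehat u_c(k_i)|\le\tfrac14\|\widehat u_c-u_c^\star\|_2\le\tfrac14\Delta_c(s,S_{<i})
\]
by Part 1. Summing over $i$ gives Eq.~\eqref{eq:theory-preference-subset-bound}.

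\textbf{Main obstacle.} The delicate step is the first one in Part 1: the meaning of $\rho_c$ and the requirement $\nabla\mathcal{R}_c(u^\star_c)=0$. With finite noisy labels, $u^\star$ is not an exact minimizer of the empirical risk. I would therefore state explicitly that $\rho_c$ measures the gradient gap $\|\nabla\mathcal{R}_c(\widehat u_c)-\nabla\mathcal{R}_c(u^\star_c)\|_2$. Equivalently, labels are taken at their Bradley--Terry expectations, so $u^\star$ is stationary, and an approximately optimized $\widehat u_c$ leaves residual $\rho_c$.

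A secondary check is that $L_c$ is positive definite once the null coordinate is removed. This holds when the comparison graph is connected to the null anchor, and it is implicitly assumed by $\gamma_c>0$.
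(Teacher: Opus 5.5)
Your proposal is correct and follows essentially the same route as the paper. The paper likewise lower-bounds the Hessian on the segment by $\omega_B L_c\succeq\omega_B\gamma_c I$ and applies the integral mean-value identity with $\nabla\mathcal{R}_c(u_c^\star)=0$; it uses an inner product with Cauchy--Schwarz where you take the minimum eigenvalue of the averaged Hessian, which is equivalent. It then runs the same hybrid argument (teacher conditionals for the first $i$ decisions, oracle continuation) together with the $\tfrac14$-Lipschitz bound on $\sigma$. Your caveat that $u_c^\star$ is stationary only when labels equal their Bradley--Terry expectations is fair; the paper simply asserts this as correct specification.
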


Theorem~\ref{thm_preference_subset_stability} establishes the connection between sparse pairwise supervision and the reliability of the induced teacher policy, showing that teacher deviation is determined by $\rho_c$ and $\gamma_c$: accurate preference fitting alone is insufficient when the comparison structure is poorly conditioned. We next analyze teacher--student distillation under student-induced prefixes. For a state $s$, let $P_i^s(\cdot\mid S_{t,<i})=\pi_T(\cdot\mid s,S_{t,<i})$ and $Q_i^s(\cdot\mid S_{t,<i})=\pi_\theta(\cdot\mid s,S_{t,<i})$ denote the teacher and student conditionals, with subset distributions $P^s=\pi_T(\cdot\mid s)$ and $Q^s=\pi_\theta(\cdot\mid s)$. The prefix-level distillation error is $\delta_i(s)=\mathbb{E}_{S_{t,<i}\sim d_i^{\pi_\theta}(\cdot\mid s)}[D_{\mathrm{KL}}(P_i^s(\cdot\mid S_{t,<i})\Vert Q_i^s(\cdot\mid S_{t,<i}))]$.

\begin{theorem} \label{thm_student_prefix_composition} 
For any state $s$, the discrepancy between the complete teacher and student subset distributions satisfies
\begin{align}
D_{\mathrm{TV}}\!\left(P^s,Q^s\right) \leq \sqrt{\frac{K}{2} \sum_{i=1}^{K}\delta_i(s)}.
\label{eq:theory-local-global-bound}
\end{align}
\end{theorem}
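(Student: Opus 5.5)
The plan is a hybrid (telescoping) argument over the $K$ include-or-skip decisions. I would order the hybrids so that the prefixes are always drawn from the \emph{student}; this is exactly the distribution $d_i^{\pi_\theta}$ that appears in $\delta_i(s)$. Then Pinsker's inequality, Jensen's inequality and Cauchy--Schwarz convert the resulting sum of conditional total-variation terms into the stated square-root bound.

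\textbf{Hybrid construction.} Fix $s$. For $i\in\{1,\ldots,K+1\}$, let $H_i$ be the distribution over $(z_1,\ldots,z_K)$, equivalently over subsets $G$, obtained as follows. Decisions $z_1,\ldots,z_{i-1}$ are sampled autoregressively from the student conditionals $Q_j^s(\cdot\mid S_{<j})$. Decisions $z_i,\ldots,z_K$ are sampled from the teacher conditionals $P_j^s(\cdot\mid S_{<j})$. Then $H_1=P^s$ and $H_{K+1}=Q^s$. By the canonical factorization in Eq.~\eqref{subset_factorization}, the map from decision vectors to subsets is a bijection, so total variation over subsets equals total variation over decision sequences. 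The triangle inequality gives
\begin{align*}
D_{\mathrm{TV}}(P^s,Q^s)\le \sum_{i=1}^{K} D_{\mathrm{TV}}(H_i,H_{i+1}).
\end{align*}

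\textbf{Bounding each hybrid gap.} The distributions $H_i$ and $H_{i+1}$ agree on the first $i-1$ decisions: both draw the prefix $S_{<i}$ from the student, so its law is $d_i^{\pi_\theta}(\cdot\mid s)$. They also use the same teacher kernels for decisions $i+1,\ldots,K$. They differ only in the kernel for $z_i$: $P_i^s$ under $H_i$ and $Q_i^s$ under $H_{i+1}$. Conditioning on the common prefix, and applying the data-processing inequality to the shared suffix kernel, gives
\begin{align*}
D_{\mathrm{TV}}(H_i,H_{i+1})\le \mathbb{E}_{S_{<i}\sim d_i^{\pi_\theta}(\cdot\mid s)}\bigl[D_{\mathrm{TV}}(P_i^s(\cdot\mid S_{<i}),Q_i^s(\cdot\mid S_{<i}))\bigr].
\end{align*}
Pinsker's inequality gives $D_{\mathrm{TV}}\le\sqrt{D_{\mathrm{KL}}/2}$ pointwise. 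Concavity of the square root (Jensen) then bounds the $i$-th term by $\sqrt{\delta_i(s)/2}$.

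\textbf{Aggregation.} Summing over $i$ and applying Cauchy--Schwarz, $\sum_{i=1}^K\sqrt{\delta_i/2}\le\sqrt{K\sum_i \delta_i/2}$, which is Eq.~\eqref{eq:theory-local-global-bound}.

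\textbf{Main obstacle.} The delicate step is choosing the direction of the hybrid interpolation. The naive choice, teacher prefix followed by student suffix, yields expectations under $d_i^{\pi_T}$ rather than $d_i^{\pi_\theta}$, which does not match the definition of $\delta_i(s)$. The construction above, student prefix followed by teacher suffix, is needed. One must also check carefully that the common suffix kernel really cancels via data processing even though it depends on the realized prefix, including the decision at position $i$. I would verify this by writing $H_i$ and $H_{i+1}$ as the same Markov kernel applied to the joint law of $(S_{<i},z_i)$.
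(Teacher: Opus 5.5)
Your proposal is correct and follows the paper's proof essentially step for step. The paper uses the same hybrids with student prefix and teacher suffix (your $H_i$ is the paper's $H_{i-1}$), then the triangle inequality, contraction of total variation under the shared teacher continuation kernel, and finally Pinsker, Jensen, and Cauchy--Schwarz.
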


Theorem~\ref{thm_student_prefix_composition} justifies distillation on student-generated prefixes by showing that local decision errors on deployment-relevant partial sets control the complete subset discrepancy. We next analyze the resulting distribution mismatch between training and deployment. Let $\nu(s)$ and $q_i(S_{t,<i}\mid s)$ denote the training state and prefix distributions, and define the position-wise distillation error as $\epsilon_i=\mathbb{E}_{s\sim\nu}\mathbb{E}_{S_{t,<i}\sim q_i(\cdot\mid s)} [D_{\mathrm{KL}}(P_i^s(\cdot\mid S_{t,<i})\Vert  Q_i^s(\cdot\mid S_{t,<i}))]$. The deployment mismatch is characterized by the state and prefix ratios  $C_{\mathrm{e}}=\sup_s\frac{\bar d^{\pi_\theta}(s)}{\nu(s)}$ and $C_{\mathrm{s},i}=\sup_{s,S_{t,<i}}\frac{d_i^{\pi_\theta}(S_{t,<i}\mid s)}{q_i(S_{t,<i}\mid s)}$, where $\bar d^{\pi_\theta}(s)=\frac{1}{T}\sum_{t=0}^{T-1}d_t^{\pi_\theta}(s)$.
Let $J(\pi)=\mathbb{E}_{\xi\sim\pi}R(\xi)$ with advantage $A_t^\pi(s,G)=Q_t^\pi(s,G)-V_t^\pi(s)$.

\begin{theorem} \label{thm_nested_performance} 
Assume that $C_{\mathrm{e}}$ and $C_{\mathrm{s},i}$ are finite and that the teacher advantage under the trajectory objective is uniformly bounded $\left|A_t^{\pi_T}(s,G)\right| \leq \bar A_T$, for all $t,s,G$. Then the return difference between the student and teacher satisfies
\begin{align}
\!\! |J(\pi_\theta) \!-\! J(\pi_T)| \!\leq\! 2T\bar A_T \min \left\{1, \sum_{i=1}^{K} \sqrt{\frac{C_{\mathrm{e}}C_{\mathrm{s},i}\epsilon_i}{2}}\right\}. \!\!
\label{eq:theory-nested-performance}
\end{align}
\end{theorem}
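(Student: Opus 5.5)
We prove Theorem~\ref{thm_nested_performance} by combining the performance difference lemma with Theorem~\ref{thm_student_prefix_composition}, using the ratios $C_{\mathrm{e}}$ and $C_{\mathrm{s},i}$ to pass from deployment distributions to training distributions.

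\textbf{Step 1: performance difference.} Treat the stage-level subset $G_t$ as the action of a finite-horizon, nonstationary MDP. The performance difference lemma gives
\begin{align*}
J(\pi_\theta)-J(\pi_T)=\sum_{t=0}^{T-1}\mathbb{E}_{s\sim d_t^{\pi_\theta}}\Big[\mathbb{E}_{G\sim\pi_\theta(\cdot\mid s)}A_t^{\pi_T}(s,G)\Big].
\end{align*}
Since $\mathbb{E}_{G\sim\pi_T(\cdot\mid s)}A_t^{\pi_T}(s,G)=0$, the inner term equals $\sum_G(\pi_\theta-\pi_T)(G\mid s)A_t^{\pi_T}(s,G)$. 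Its absolute value is at most $2\bar A_T\,D_{\mathrm{TV}}(P^s,Q^s)$ (we use the convention $D_{\mathrm{TV}}=\tfrac12\|\cdot\|_1$ together with $\|A\|_\infty\le\bar A_T$; a centering argument could sharpen this). Summing over $t$ gives
\begin{align*}
|J(\pi_\theta)-J(\pi_T)|\le 2T\bar A_T\,\mathbb{E}_{s\sim\bar d^{\pi_\theta}}\big[D_{\mathrm{TV}}(P^s,Q^s)\big].
\end{align*}
Because $D_{\mathrm{TV}}\le1$, the $\min\{1,\cdot\}$ in the statement follows immediately, so it remains to bound the expectation by $\sum_i\sqrt{C_{\mathrm{e}}C_{\mathrm{s},i}\epsilon_i/2}$.

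\textbf{Step 2: per-state TV as a sum over positions.} Theorem~\ref{thm_student_prefix_composition} alone gives $\sqrt{\tfrac K2\sum_i\delta_i(s)}$, which does not split into one term per position. We therefore go back to its hybrid argument. Let $H_i$ be the subset distribution that uses student conditionals for the first $i$ positions and teacher conditionals afterward, so that $H_0=P^s$ and $H_K=Q^s$. By the triangle inequality and data processing, $D_{\mathrm{TV}}(H_{i-1},H_i)$ is at most the expected local TV at position $i$ under the student prefix distribution $d_i^{\pi_\theta}$. Pinsker and Jensen then bound this by $\sqrt{\delta_i(s)/2}$. Hence
\begin{align*}
D_{\mathrm{TV}}(P^s,Q^s)\le\sum_{i=1}^K\sqrt{\delta_i(s)/2}.
\end{align*}
This is the per-position decomposition used inside Theorem~\ref{thm_student_prefix_composition}, before Cauchy--Schwarz is applied. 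Applying Cauchy--Schwarz to it recovers Theorem~\ref{thm_student_prefix_composition}, which confirms that the pieces are consistent.

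\textbf{Step 3: change of measure.} Take the expectation over $s\sim\bar d^{\pi_\theta}$ and apply Jensen to each concave square root:
\begin{align*}
\mathbb{E}_{\bar d^{\pi_\theta}}\sqrt{\delta_i(s)/2}\le\sqrt{\mathbb{E}_{\bar d^{\pi_\theta}}\delta_i(s)/2}.
\end{align*}
Since the KL integrand is nonnegative, the density ratios bound the inner expectation:
\begin{align*}
\mathbb{E}_{s\sim\bar d^{\pi_\theta}}\mathbb{E}_{S\sim d_i^{\pi_\theta}(\cdot\mid s)}[\mathrm{KL}]\le C_{\mathrm{e}}\,\mathbb{E}_{s\sim\nu}\,C_{\mathrm{s},i}\,\mathbb{E}_{S\sim q_i(\cdot\mid s)}[\mathrm{KL}]=C_{\mathrm{e}}C_{\mathrm{s},i}\epsilon_i.
\end{align*}
Substituting into Steps 1 and 2 gives Eq.~\eqref{eq:theory-nested-performance}.

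\textbf{Main obstacle.} The delicate step is Step 2. The hybrid telescoping must be carried out so that each local divergence is evaluated under student prefixes $d_i^{\pi_\theta}$, because that is what $C_{\mathrm{s},i}$ controls. This requires ordering the hybrids with student conditionals on the early positions, as done above. The reverse ordering would instead produce teacher prefixes, and the ratio $C_{\mathrm{s},i}$ would no longer apply. A secondary concern is the TV and advantage constant in Step 1. If the normalization conventions conflict, the factor $2$ absorbs the discrepancy.
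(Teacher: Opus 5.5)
Your proposal is correct and follows essentially the same route as the paper's proof. Both use the performance-difference identity and subtract the zero-mean teacher advantage to obtain $2T\bar A_T\,\mathbb{E}_{s\sim\bar d^{\pi_\theta}}[D_{\mathrm{TV}}]$, then apply the pre-Cauchy--Schwarz per-position hybrid bound from Theorem~2 with student prefixes $d_i^{\pi_\theta}$, then Jensen on each square root, and finally the change of measure through $C_{\mathrm{e}}C_{\mathrm{s},i}$. The only difference is the order in which the steps are presented.
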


Theorem~\ref{thm_nested_performance} characterizes how distribution shifts propagate to long-horizon performance through two sources of mismatch: state-level shift from off-policy trajectory collection and prefix-level shift from autoregressive capability selection. On-policy state rollout removes state mismatch, while prefix mixing improves coverage of student partial sets. The bound is relative to the teacher policy and may scale with the horizon through $\bar A_T$, providing an error-dependence characterization rather than a linear-horizon guarantee.

\begin{table*}[t]
\centering
\renewcommand\arraystretch{0.9}
\resizebox{1.0\textwidth}{!}{
\begin{tabular}
{c|c|c|c|cc|cc|cc|cc} 
\toprule[1.2pt]
\multirow{2}{*}{\multirowcell{1.5}{\centering\textbf{Models}}} & \multirow{2}{*}{\multirowcell{1.5}{\centering\textbf{Baselines}}} & \multirow{2}{*}{\multirowcell{1.5}{\centering\textbf{Multi-Agent}}} & \multirow{2}{*}{\multirowcell{-1.1}{\makecell{\textbf{Adaptive} \\ \textbf{Control}}}}  & \multicolumn{2}{c|}{\centering\textbf{OSWorld}} & \multicolumn{2}{c|}{\centering\textbf{VisualWebArena}} & \multicolumn{2}{c|}{\centering\textbf{GAIA}} & \multicolumn{2}{c}{\centering\textbf{MMMU-Pro}}  \\ \cmidrule[0.5pt](l{1pt}r{0pt}){5-12}

& & & & Succ. $\uparrow$ & Cost $\downarrow$ & Succ. $\uparrow$ & Cost $\downarrow$ & Acc. $\uparrow$ & Cost $\downarrow$ & Acc. $\uparrow$ & Cost $\downarrow$  \\ \cmidrule[0.8pt](l{1pt}r{0pt}){1-12}

\multirow{6}{*}{\multirowcell{2}{Qwen3.6-27B}}    
        & \textsc{ReAct} & \ding{56} & \ding{52} & 17.2 & 0.90 & 13.6 & 0.88 & 33.1 & 0.86 & 43.2 & 0.83   \\

        & \textsc{Toolformer} & \ding{56} & \ding{56} & 20.8 & 0.83 & 15.1 & 0.81 & 39.4 & 0.75 & 44.1 & 0.85   \\

        & \textsc{Puppeteer} & \ding{52} & \ding{56} & 24.9 & 0.72 & 19.0 & 0.73 & 38.7 & 0.70 & 47.6 & 0.71   \\

        & \textsc{AutoTool} & \ding{56} & \ding{56} & 26.3 & 0.64 & 22.4 & 0.59 & 44.0 & 0.62 & 46.9 & 0.66   \\

        & \textsc{NaviAgent} & \ding{56} & \ding{52} & 31.7 & 0.55 & 23.1 & 0.51 & 46.2 & 0.53 & 52.3 & 0.57   \\

        & \textsc{CoCA}(Ours) & \ding{52} & \ding{52} & \textbf{35.1} & \textbf{0.36} & \textbf{26.8} & \textbf{0.33} & \textbf{51.9} & \textbf{0.35} & \textbf{53.8} & \textbf{0.46}    \\ \midrule[0.8pt]

\multirow{6}{*}{\multirowcell{2}{Kimi-K2.6}}
       
        & \textsc{ReAct} & \ding{56} & \ding{52} & 23.5 & 0.87 & 20.2 & 0.85 & 41.6 & 0.83 & 51.4 & 0.79   \\

        & \textsc{Toolformer} & \ding{56} & \ding{56} & 28.1 & 0.79 & 21.7 & 0.77 & 47.2 & 0.72 & 54.7 & 0.81   \\

        & \textsc{Puppeteer} & \ding{52} & \ding{56} & 30.4 & 0.68 & 26.9 & 0.69 & 50.8 & 0.66 & 55.9 & 0.67   \\

        & \textsc{AutoTool} & \ding{56} & \ding{56} & 34.2 & 0.60 & 27.3 & 0.55 & 51.2 & 0.58 & 55.1 & 0.62   \\

        & \textsc{NaviAgent} & \ding{56} & \ding{52} & 36.8 & 0.51 & 32.5 & 0.47 & 57.4 & 0.49 & 60.5 & 0.53   \\ 

        & \textsc{CoCA}(Ours) & \ding{52} & \ding{52} & \textbf{43.3} & \textbf{0.31} & \textbf{35.6} & \textbf{0.29} & \textbf{60.9} & \textbf{0.31} & \textbf{62.5} & \textbf{0.42}    \\ 
        
\bottomrule[1.2pt]
\end{tabular}}
\caption{The overall evaluation results of \textsc{CoCA} and other baseline methods on four benchmark datasets. The best-performing method is marked by \textbf{bold}.}
\label{accuracy}
\end{table*}

\begin{table*}[t]
\centering
\renewcommand\arraystretch{1.0}
\resizebox{1.0\textwidth}{!}{
\begin{tabular}{c|ccccc|ccccc|ccccc} 
\toprule[1.2pt]
\multirow{2}{*}{\multirowcell{2}{\centering\textbf{Methods}}} & \multicolumn{5}{c|}{\centering\textbf{OSWorld}} & \multicolumn{5}{c|}{\centering\textbf{VisualWebArena}} & \multicolumn{5}{c}{\centering\textbf{GAIA}}  \\ \cmidrule[0.5pt](l{1pt}r{0pt}){2-16}

& SwC $\downarrow$ & DSA $\uparrow$ & PPA $\uparrow$ & TAR $\uparrow$ & UGC $\uparrow$ & SwC $\downarrow$ & DSA $\uparrow$ & PPA $\uparrow$ & TAR $\uparrow$ & UGC $\uparrow$ & SwC $\downarrow$ & DSA $\uparrow$ & PPA $\uparrow$ & TAR $\uparrow$ & UGC $\uparrow$  \\ \cmidrule[0.8pt](l{1pt}r{0pt}){1-16}

\textsc{ReAct} & 3.38 & 0.52 & 0.53 & 0.48 & 0.09 & 3.21 & 0.46 & 0.49 & 0.47 & 0.07 & 2.71 & 0.54 & 0.55 & 0.50 & 0.15 \\

\textsc{Toolformer} & 3.11 & 0.55 & 0.58 & 0.55 & 0.21 & 2.79 & 0.54 & 0.54 & 0.51 & 0.18 & 2.40 & 0.58 & 0.61 & 0.56 & 0.22 \\

\textsc{Puppeteer} & 2.43 & 0.62 & 0.61 & 0.59 & 0.29 & 2.36 & 0.57 & 0.62 & 0.60 & 0.30 & 1.88 & 0.65 & 0.64 & 0.62 & 0.38 \\

\textsc{AutoTool} & 2.17 & 0.64 & 0.69 & 0.66 & 0.42 & 1.98 & 0.63 & 0.65 & 0.61 & 0.35 & 1.69 & 0.66 & 0.72 & 0.69 & 0.44 \\

\textsc{NaviAgent} & 1.71 & 0.72 & 0.73 & 0.70 & 0.50 & 1.72 & 0.68 & 0.71 & 0.69 & 0.49 & 1.31 & 0.74 & 0.76 & 0.73 & 0.55 \\ \midrule[0.8pt]

\textsc{CoCA}(Ours) & \textbf{0.83} & \textbf{0.88} & \textbf{0.86} & \textbf{0.84} & \textbf{0.72} & \textbf{0.79} & \textbf{0.85} & \textbf{0.83} & \textbf{0.81} & \textbf{0.69} & \textbf{0.61} & \textbf{0.90} & \textbf{0.88} & \textbf{0.86} & \textbf{0.75} \\ 

\bottomrule[1.2pt]
\end{tabular}}
\caption{Evaluation of allocation quality under Kimi-K2.6. The best-performing method is marked by \textbf{bold}.}
\label{allocation}
\end{table*}

\begin{table*}[t]
\centering
\renewcommand\arraystretch{1.0}
\resizebox{1.0\textwidth}{!}{
\begin{tabular}{c|cccccc|cccccc} 
\toprule[1.2pt]
\multirow{2}{*}{\multirowcell{2}{\centering\textbf{Methods}}} & \multicolumn{6}{c|}{\centering\textbf{OSWorld}} & \multicolumn{6}{c}{\centering\textbf{GAIA}} \\ \cmidrule[0.5pt](l{1pt}r{0pt}){2-13}

& Succ. $\uparrow$ & SwC $\downarrow$ & DSA $\uparrow$ & PPA $\uparrow$ & TAR $\uparrow$ & UGC $\uparrow$ & Acc. $\uparrow$ & SwC $\downarrow$ & DSA $\uparrow$ & PPA $\uparrow$ & TAR $\uparrow$ & UGC $\uparrow$  \\ \cmidrule[0.8pt](l{1pt}r{0pt}){1-13}

w/o Set-Dep & 39.6 & 0.97 & 0.79 & 0.77 & 0.75 & 0.66 & 56.4 & 0.74 & 0.82 & 0.80 & 0.78 & 0.69  \\ 

w/o Set-On Policy & 41.2 & 0.91 & 0.84 & 0.78 & 0.76 & 0.68 & 58.1 & 0.69 & 0.86 & 0.81 & 0.79 & 0.71 \\ 

w/o Stage-On Policy & 38.4 & 0.95 & 0.81 & 0.82 & 0.80 & 0.67 & 55.7 & 0.72 & 0.84 & 0.85 & 0.83 & 0.70 \\ 

w/o Traj. RL & 40.1 & 1.58 & 0.86 & 0.85 & 0.83 & 0.69 & 57.3 & 1.19 & 0.89 & 0.87 & 0.85 & 0.72 \\ 

w/o Null-Anchor & 41.5 & 0.89 & 0.83 & 0.74 & 0.78 & 0.58 & 58.6 & 0.67 & 0.85 & 0.76 & 0.80 & 0.61 \\ \midrule[0.8pt]

\textsc{CoCA} & \textbf{43.3} & \textbf{0.83} & \textbf{0.88} & \textbf{0.86} & \textbf{0.84} & \textbf{0.72} & \textbf{60.9} & \textbf{0.61} & \textbf{0.90} & \textbf{0.88} & \textbf{0.86} & \textbf{0.75} \\ 

\bottomrule[1.2pt]
\end{tabular}}
\caption{Ablation study on the OSWorld and GAIA benchmarks under Kimi-K2.6.}
\label{ablation}
\end{table*}

\section{Experiments}

\subsection{Experimental Setup}

\subsubsection{Datasets}
We evaluate on four benchmarks that cover diverse aspects of agentic reasoning and multimodal task execution. \textbf{OSWorld}~\citep{xie2024osworld} evaluates computer-use agents in realistic operating system environments, requiring visual perception, tool interaction, and multi-step task completion. \textbf{VisualWebArena}~\citep{koh2024visualwebarena} assesses multimodal web agents through realistic website navigation and interaction tasks involving visual grounding and action planning. \textbf{GAIA}~\citep{mialon2024gaia} evaluates general AI assistants on real-world tasks requiring multi-step reasoning, external knowledge acquisition, and tool usage. \textbf{MMMU-Pro}~\citep{yue2024mmmu} tests advanced multimodal reasoning with expert-level questions requiring the integration of complex visual and textual information.

\subsubsection{Baselines}
We compare our method against state-of-the-art baselines. \textsc{ReAct}~\citep{yao2023react} enables language agents to interleave reasoning and action execution, allowing them to interact with external tools through iterative thought-action trajectories. \textsc{Toolformer}~\citep{schick2023toolformer} learns to invoke external tools by augmenting language models with self-supervised tool-use annotations. \textsc{Puppeteer}~\citep{dang2025multi} is a multi-agent tool-use framework that coordinates specialized agents and tools for complex task execution. \textsc{AutoTool}~\citep{jia2026autotool} automatically discovers and selects appropriate tools during inference to improve autonomous tool utilization. \textsc{NaviAgent}~\citep{jiang2026naviagent} is a navigation-oriented agent framework that dynamically plans and executes tool interactions for long-horizon tasks.

\subsubsection{Evaluation Metrics}
We adopt two complementary categories of metrics for evaluation. For \textit{Task-Level Performance}, we follow each benchmark's standard protocol, including execution-based success rate (Succ.) for OSWorld and VisualWebArena, quasi-exact-match accuracy (Acc.) for GAIA and MMMU-Pro, and the normalized inference cost per task, defined as a method's average total inference tokens (input and output across all activated capabilities and stages) divided by that of the \textit{all-open} baseline under the same executor, yielding a value in $[0,1]$ where lower is more efficient. For \textit{Allocation Quality}, since task success alone does not reflect whether capabilities are allocated efficiently and stably, we further evaluate allocation behavior. We report: (\expandafter{\romannumeral1}) \textit{Switching Cost} (SwC), the average per-stage symmetric difference between consecutive active capability sets, measuring allocation stability; (\expandafter{\romannumeral2}) \textit{Demand-Shift Adaptation Accuracy} (DSA), the fraction of stages where the selected capabilities match the ground-truth required capabilities under controlled demand shifts; (\expandafter{\romannumeral3}) \textit{Preference Prediction Accuracy} (PPA), the agreement between realized include/skip decisions and held-out oracle conditional pairwise preferences; (\expandafter{\romannumeral4}) \textit{Teacher-Agreement Rate} (TAR), the agreement between realized capability decisions and counterfactual rollout oracle decisions; and (\expandafter{\romannumeral5}) \textit{Utility--Gain Correlation} (UGC), the correlation between capability selection preference and the realized counterfactual gain of activation.

\subsubsection{Implementation Details}
We conduct experiments on Qwen3.6-27B~\citep{qwen36blog} and Kimi-K2.6~\citep{kimik26blog} as executors instantiating a capability library of size $K=8$, based on which we train the lightweight student allocator $\pi_\theta$. We use GPT-5.5~\citep{gpt55blog} as the teacher model. We set $\lambda=0.2$, $\mu=0.05$, and $\alpha=1.0$ following our sensitivity analysis. The utility model $v_\phi$ is trained with the Bradley--Terry objective using preference weight $w=1.0$, and the null anchor $v_\phi(s,k_{\varnothing}\mid S)\equiv 0$. The mixture weight $\eta$ is linearly annealed from $1$ to $0$ over the first half of training, and we use the batch-mean return as the baseline $b$ with within-batch advantage standardization. The utility model and student policy use separate AdamW optimizers with learning rates $1\text{e-}4$ and $5\text{e-}5$, trained over $5$ outer on-policy iterations. At inference, only $\pi_\theta$ performs autoregressive capability selection.

\subsection{Experimental Results}

\subsubsection{Overall Performance}
Table~\ref{accuracy} demonstrates that \textsc{CoCA} consistently achieves superior performance across benchmarks and backbone models while maintaining lower execution costs. This improvement can be attributed to \textsc{CoCA}’s collaborative orchestration mechanism, which dynamically assigns subtasks to appropriate agents and coordinates their execution trajectories instead of relying on a single-agent reasoning process or predefined tool invocation patterns. By jointly optimizing agent collaboration and execution planning, \textsc{CoCA} improves task completion reliability while avoiding unnecessary interactions. To further investigate the underlying mechanism, Table~\ref{allocation} evaluates the quality of agent allocation decisions during execution. \textsc{CoCA} achieves consistently better allocation quality across different environments, indicating that its gains originate from more accurate agent-task matching and adaptive coordination. In contrast, existing methods either lack explicit multi-agent collaboration or rely on fixed execution strategies, leading to suboptimal resource utilization. These results verify that effective orchestration, rather than simply increasing the number of agents or tool calls, is the key factor enabling reliable long-horizon task execution.

\begin{figure}[t]
\centering
\subfloat{
    \includegraphics[width=0.23\textwidth]{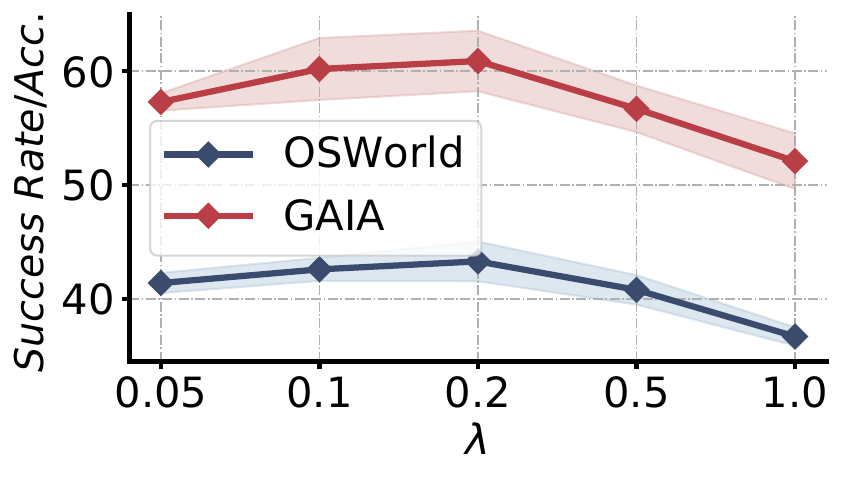}}
\subfloat{
    \includegraphics[width=0.23\textwidth]{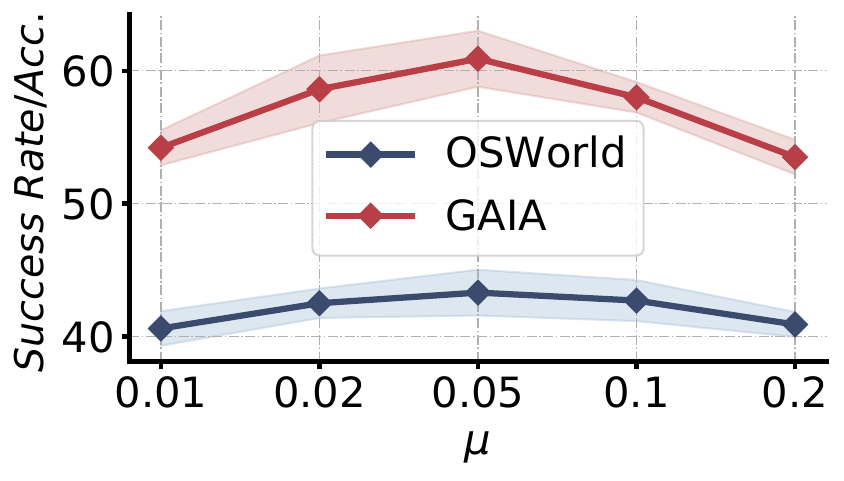}}
\vspace{5pt}
\subfloat{
    \includegraphics[width=0.23\textwidth]{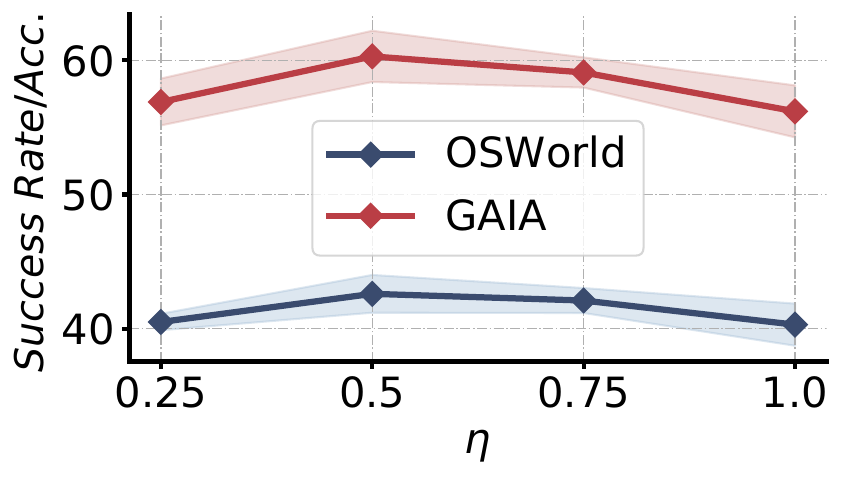}}
\subfloat{
    \includegraphics[width=0.23\textwidth]{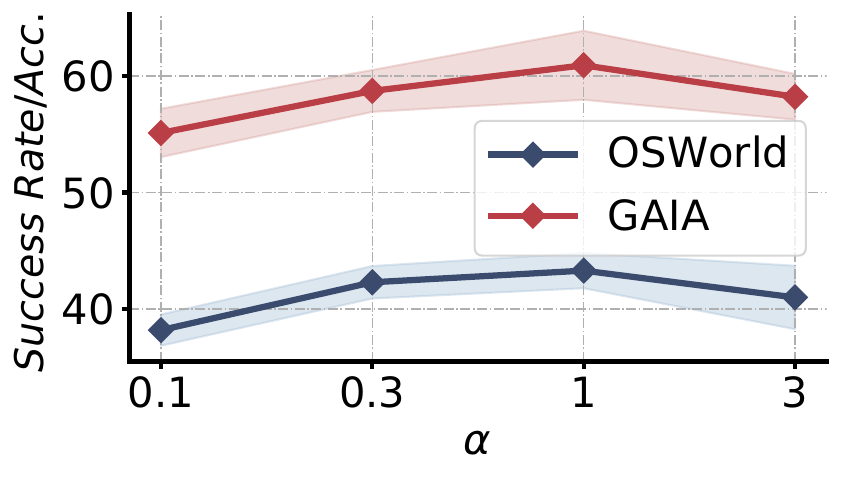}}
\caption{Hyperparameter sensitivity analysis on OSWorld and GAIA benchmarks under Kimi-K2.6.}
\label{hyper}
\end{figure}

\subsubsection{Ablation Studies}
In Table~\ref{ablation}, we evaluate the contribution of each component in \textsc{CoCA}. Removing any component consistently degrades performance, validating their complementary roles in long-horizon orchestration. Specifically, w/o Set-Dep shows the importance of dependency-aware decomposition for effective task planning, while w/o Set-On Policy and w/o Stage-On Policy demonstrate the necessity of adaptive decisions at both orchestration and execution levels. Removing Traj. RL leads to inferior execution quality and higher interaction cost, confirming the benefit of trajectory-level optimization. Meanwhile, w/o Null-Anchor mainly affects execution stability, highlighting the role of anchor-based regulation in preventing error accumulation.

\begin{table}[t]
\centering
\renewcommand\arraystretch{1.0}
\resizebox{0.48\textwidth}{!}{
\begin{tabular}{c|ccccc} 
\toprule[1.2pt]

\makecell{\centering\textbf{Methods}} & \makecell{Avg. \\ Latency} & \makecell{LLM \\ Calls} & \makecell{Token \\ Cost (k)} & \makecell{Avg. Act. \\ /stage}  & \makecell{Train \\ Cost}  \\   \cmidrule[0.5pt](l{1pt}r{0pt}){1-6}

\textsc{ReAct} & 62.4 & 14.8 & 41.2 & 6.7 & 0.0 \\

\textsc{Toolformer} & 55.1 & 12.3 & 35.7 & 5.9 & 1.2 \\

\textsc{Puppeteer} & 47.8 & 10.6 & 31.4 & 5.1 & 0.0 \\

\textsc{AutoTool} & 43.2 & 9.4  & 28.9 & 4.6 & 3.5 \\

\textsc{NaviAgent} & 38.5 & 8.1  & 24.3 & 3.8 & 5.8 \\ \midrule[0.8pt]

\textsc{CoCA}(Ours) & \textbf{31.7} & \textbf{6.5} & \textbf{18.6} & \textbf{2.9} & 7.4 \\

\bottomrule[1.2pt]
\end{tabular}}
\caption{Efficiency on GAIA benchmark under Kimi-K2.6.}
\label{latency}
\end{table}

\subsubsection{Hyperparameter Sensitivity}
Figure~\ref{hyper} investigates the impact of four key hyperparameters, including the activation-cost weight $\lambda$, switching-cost weight $\mu$, teacher-student mixture coefficient $\eta$, and distillation-RL balancing coefficient $\alpha$. Overall, \textsc{CoCA} remains robust across different parameter ranges. The results show that moderate trade-offs between capability cost and allocation stability lead to better performance, while overly aggressive cost regularization or switching constraints may limit effective capability selection. Meanwhile, appropriate settings of $\eta$ and $\alpha$ effectively balance teacher-guided distillation and trajectory-level optimization, highlighting the effectiveness of the dual-level policy learning strategy in \textsc{CoCA}.

\subsubsection{Efficiency Analysis}
We report the execution efficiency comparison on GAIA in Table~\ref{latency}. \textsc{CoCA} consistently reduces inference latency, LLM invocation frequency, and token consumption, indicating that effective capability orchestration can improve both performance and efficiency. The reduction in average activated capabilities per stage further verifies that \textsc{CoCA} performs adaptive capability selection rather than relying on excessive agent or tool execution. The additional training cost comes from the iterative on-policy teacher-guided and trajectory-level policy optimization, which trades training overhead for efficient and reliable inference.

\section{Conclusion}
In this paper, we investigated the combinatorial capability allocation problem for long-horizon multimodal agents. We proposed \textsc{CoCA}, a novel framework that learns capability-subset policies from sparse conditional comparisons. By modeling context-dependent capability utilities, performing autoregressive subset generation, and introducing dual-level on-policy distillation, \textsc{CoCA} addresses the challenges of combinatorial action spaces and distribution mismatch in long-horizon allocation. Trajectory-level reinforcement learning jointly optimizes task performance, activation cost, and allocation stability. Experiments demonstrate that \textsc{CoCA} consistently outperforms existing state-of-the-art methods.




\bibliography{reference}

@article{xie2024osworld,
  title={Osworld: Benchmarking multimodal agents for open-ended tasks in real computer environments},
  author={Xie, Tianbao and Zhang, Danyang and Chen, Jixuan and Li, Xiaochuan and Zhao, Siheng and Cao, Ruisheng and Hua, Toh J and Cheng, Zhoujun and Shin, Dongchan and Lei, Fangyu and others},
  journal={Advances in Neural Information Processing Systems},
  volume={37},
  pages={52040--52094},
  year={2024}
}

@inproceedings{koh2024visualwebarena,
  title={Visualwebarena: Evaluating multimodal agents on realistic visual web tasks},
  author={Koh, Jing Yu and Lo, Robert and Jang, Lawrence and Duvvur, Vikram and Lim, Ming and Huang, Po-Yu and Neubig, Graham and Zhou, Shuyan and Salakhutdinov, Russ and Fried, Daniel},
  booktitle={Proceedings of the 62nd Annual Meeting of the Association for Computational Linguistics (Volume 1: Long Papers)},
  pages={881--905},
  year={2024}
}

@inproceedings{mialon2024gaia,
  title={Gaia: a benchmark for general ai assistants},
  author={Mialon, Gr{\'e}goire and Fourrier, Cl{\'e}mentine and Wolf, Thomas and LeCun, Yann and Scialom, Thomas},
  booktitle={International Conference on Learning Representations},
  volume={2024},
  pages={9025--9049},
  year={2024}
}

@article{yue2024mmmu,
  title={Mmmu-pro: A more robust multi-discipline multimodal understanding benchmark},
  author={Yue, Xiang and Zheng, Tianyu and Ni, Yuansheng and Wang, Yubo and Zhang, Kai and Tong, Shengbang and Sun, Yuxuan and Yu, Botao and Zhang, Ge and Sun, Huan and others},
  journal={arXiv preprint arXiv:2409.02813},
  year={2024}
}

@inproceedings{yao2023react,
  title={ReAct: Synergizing Reasoning and Acting in Language Models},
  author={Yao, Shunyu and Zhao, Jeffrey and Yu, Dian and Du, Nan and Shafran, Izhak and Narasimhan, Karthik and Cao, Yuan},
  booktitle={International Conference on Learning Representations (ICLR)},
  year={2023}
}

@article{schick2023toolformer,
  title={Toolformer: Language models can teach themselves to use tools},
  author={Schick, Timo and Dwivedi-Yu, Jane and Dess{\`\i}, Roberto and Raileanu, Roberta and Lomeli, Maria and Hambro, Eric and Zettlemoyer, Luke and Cancedda, Nicola and Scialom, Thomas},
  journal={Advances in neural information processing systems},
  volume={36},
  pages={68539--68551},
  year={2023}
}

@article{dang2025multi,
  title={Multi-agent collaboration via evolving orchestration},
  author={Dang, Yufan and Qian, Chen and Luo, Xueheng and Fan, Jingru and Xie, Zihao and Shi, Ruijie and Chen, Weize and Yang, Cheng and Che, Xiaoyin and Tian, Ye and others},
  journal={Advances in neural information processing systems},
  volume={38},
  pages={165025--165059},
  year={2025}
}

@inproceedings{jia2026autotool,
  title={AutoTool: Efficient tool selection for large language model agents},
  author={Jia, Jingyi and Li, Qinbin},
  booktitle={Proceedings of the AAAI Conference on Artificial Intelligence},
  volume={40},
  pages={31265--31273},
  year={2026}
}

@inproceedings{jiang2026naviagent,
  title={NaviAgent: Graph\nobreakdash-Driven Bilevel Planning for Scalable Tool Orchestration},
  author={Yan Jiang and Hao Zhou and Lizhong Gu and Tianlong Li and Ruinan Jin and Wanqi Zhou and Ai Han},
  booktitle={Forty-third International Conference on Machine Learning},
  year={2026},
  url={https://openreview.net/forum?id=omRyEle4jZ}
}

@misc{qwen36blog,
    title = {{Qwen3.6-27B}: Flagship-Level Coding in a 27B Dense Model},
    url = {https://qwen.ai/blog?id=qwen3.6-27b},
    author = {{Qwen Team}},
    month = {April},
    year = {2026}
}

@misc{kimik26blog,
    title = {Kimi K2.6: Advancing Open-Source Coding},
    url = {https://www.kimi.com/blog/kimi-k2-6},
    author = {Kimi},
    month = {April},
    year = {2026}
}

@misc{gpt55blog,
    title = {Introducing GPT-5.5},
    url = {https://openai.com/zh-Hans-CN/index/introducing-gpt-5-5/},
    author = {OpenAI},
    month = {April},
    year = {2026}
}

@inproceedings{tian2025mmina,
  title={Mmina: Benchmarking multihop multimodal internet agents},
  author={Tian, Shulin and Zhang, Ziniu and Chen, Liang-Yu and Liu, Ziwei},
  booktitle={Findings of the Association for Computational Linguistics: ACL 2025},
  pages={13682--13697},
  year={2025}
}

@inproceedings{wu2024autogen,
  title={Autogen: Enabling next-gen LLM applications via multi-agent conversations},
  author={Wu, Qingyun and Bansal, Gagan and Zhang, Jieyu and Wu, Yiran and Li, Beibin and Zhu, Erkang and Jiang, Li and Zhang, Xiaoyun and Zhang, Shaokun and Liu, Jiale and others},
  booktitle={First conference on language modeling},
  year={2024}
}

@article{xu2025theagentcompany,
  title={Theagentcompany: benchmarking llm agents on consequential real world tasks},
  author={Xu, Frank Fangzheng and Song, Yufan and Li, Boxuan and Tang, Yuxuan and Jain, Kritanjali and Bao, Mengxue and Wang, Zora and Zhou, Xuhui and Guo, Zhitong and Cao, Murong and others},
  journal={Advances in Neural Information Processing Systems},
  volume={38},
  year={2025}
}

@inproceedings{ong2025routellm,
  title={Route{LLM}: Learning to Route {LLM}s from Preference Data},
  author={Isaac Ong and Amjad Almahairi and Vincent Wu and Wei-Lin Chiang and Tianhao Wu and Joseph E. Gonzalez and M Waleed Kadous and Ion Stoica},
  booktitle={The Thirteenth International Conference on Learning Representations},
  year={2025},
  url={https://openreview.net/forum?id=8sSqNntaMr}
}

@inproceedings{fan2025workflowllm,
  title={Workflowllm: Enhancing workflow orchestration capability of large language models},
  author={Fan, Shengda and Cong, Xin and Fu, Yuepeng and Zhang, Zhong and Zhang, Shuyan and Liu, Yuanwei and Wu, Yesai and Lin, Yankai and Liu, Zhiyuan and Sun, Maosong},
  booktitle={International Conference on Learning Representations},
  volume={2025},
  pages={24498--24525},
  year={2025}
}

@inproceedings{xu2025crab,
    title = "{CRAB}: Cross-environment Agent Benchmark for Multimodal Language Model Agents",
    author = "Xu, Tianqi  and
      Chen, Linyao  and
      Wu, Dai-Jie  and
      Chen, Yanjun  and
      Zhang, Zecheng  and
      Yao, Xiang  and
      Xie, Zhiqiang  and
      Chen, Yongchao  and
      Liu, Shilong  and
      Qian, Bochen  and
      Yang, Anjie  and
      Jin, Zhaoxuan  and
      Deng, Jianbo  and
      Torr, Philip  and
      Ghanem, Bernard  and
      Li, Guohao",
    editor = "Che, Wanxiang  and
      Nabende, Joyce  and
      Shutova, Ekaterina  and
      Pilehvar, Mohammad Taher",
    booktitle = "Findings of the Association for Computational Linguistics: ACL 2025",
    month = jul,
    year = "2025",
    address = "Vienna, Austria",
    publisher = "Association for Computational Linguistics",
    url = "https://aclanthology.org/2025.findings-acl.1113/",
    doi = "10.18653/v1/2025.findings-acl.1113",
    pages = "21607--21647",
    ISBN = "979-8-89176-256-5"
}

@inproceedings{liu2025toolace,
  title={Toolace: Winning the points of llm function calling},
  author={Liu, Weiwen and Huang, Xu and Zeng, Xingshan and Yu, Shuai and Li, Dexun and Wang, Shuai and Gan, Weinan and Liu, Zhengying and Yu, Yuanqing and WANG, Zezhong and others},
  booktitle={International conference on learning representations},
  volume={2025},
  pages={41359--41381},
  year={2025}
}

@article{zhang2025agentorchestra,
  title={Agentorchestra: A hierarchical multi-agent framework for general-purpose task solving},
  author={Zhang, Wentao and Cui, Ce and Zhao, Yilei and Hu, Rui and Liu, Yang and Zhou, Yahui and An, Bo},
  journal={arXiv e-prints},
  pages={arXiv--2506},
  year={2025}
}

@inproceedings{zhang2026mtrouter,
  title={MTRouter: Cost-Aware Multi-Turn LLM Routing with History--Model Joint Embeddings},
  author={Zhang, Yiqun and Li, Hao and Wang, Zihan and Feng, Shi and Yang, Xiaocui and Wang, Daling and Zhang, Bo and Bai, Lei and Hu, Shuyue},
  booktitle={Proceedings of the 64th Annual Meeting of the Association for Computational Linguistics (Volume 1: Long Papers)},
  pages={44206--44226},
  year={2026}
}

@inproceedings{xi2025agentgym,
  title={Agentgym: Evaluating and training large language model-based agents across diverse environments},
  author={Xi, Zhiheng and Ding, Yiwen and Chen, Wenxiang and Hong, Boyang and Guo, Honglin and Wang, Junzhe and Guo, Xin and Yang, Dingwen and Liao, Chenyang and He, Wei and others},
  booktitle={Proceedings of the 63rd Annual Meeting of the Association for Computational Linguistics (Volume 1: Long Papers)},
  pages={27914--27961},
  year={2025}
}

@article{xu2026evolution,
  title={The evolution of tool use in llm agents: From single-tool call to multi-tool orchestration},
  author={Xu, Haoyuan and Li, Chang and Ma, Xinyan and Ou, Xianhao and Zhang, Zihan and He, Tao and Liu, Xiangyu and Wang, Zixiang and Liang, Jiafeng and Chu, Zheng and others},
  journal={arXiv preprint arXiv:2603.22862},
  year={2026}
}

@article{zhang2026evoroute,
  title={EvoRoute: Experience-Driven Self-Routing LLM Agent Systems},
  author={Zhang, Guibin and Yu, Haiyang and Yang, Kaiming and Wu, Bingli and Huang, Fei and Li, Yongbin and Yan, Shuicheng},
  journal={arXiv preprint arXiv:2601.02695},
  year={2026}
}

@inproceedings{niu2026routing,
  title={Routing with Generated Data: Annotation-Free LLM Skill Estimation and Expert Selection},
  author={Niu, Tianyi and Chen, Justin and Winata, Genta Indra and Zhang, Shi-Xiong and Chakraborty, Supriyo and Sahu, Sambit and Zhang, Yue and Stengel-Eskin, Elias and Bansal, Mohit},
  booktitle={Proceedings of the 64th Annual Meeting of the Association for Computational Linguistics (Volume 1: Long Papers)},
  pages={32441--32466},
  year={2026}
}

@inproceedings{zhang2026agentrouter,
  title={Agentrouter: A knowledge-graph-guided llm router for collaborative multi-agent question answering},
  author={Zhang, Zheyuan and Shi, Kaiwen and Yuan, Zhengqing and Wang, Zehong and Ma, Tianyi and Murugesan, Keerthiram and Galassi, Vincent and Zhang, Chuxu and Ye, Yanfang},
  booktitle={Proceedings of the 64th Annual Meeting of the Association for Computational Linguistics (Volume 1: Long Papers)},
  pages={788--809},
  year={2026}
}

@article{zheng2026skillselect,
  title={SkillSelect-Serve: Budget-Controllable and QoS-Aware Skill Service Recommendation and Composition for Small LLM Agents},
  author={Zheng, Jingyuan and Wang, Dongjing and Zhang, Xin and Huang, Butian and Zhang, Haiping and Yu, Dongjin and Deng, Shuguang},
  journal={arXiv preprint arXiv:2607.00011},
  year={2026}
}

@article{li2026organizing,
  title={Organizing, orchestrating, and benchmarking agent skills at ecosystem scale},
  author={Li, Hao and Mu, Chunjiang and Chen, Jianhao and Ren, Siyue and Cui, Zhiyao and Zhang, Yiqun and Bai, Lei and Hu, Shuyue},
  journal={arXiv preprint arXiv:2603.02176},
  year={2026}
}

@inproceedings{ruan2026aorchestra,
  title={Aorchestra: Automating sub-agent creation for agentic orchestration},
  author={Ruan, Jianhao and Xu, Zhihao and Peng, Yiran and Ren, Fashen and Yu, Zhaoyang and Liang, Xinbing and Xiang, Jinyu and Chen, Yongru and Liu, Bang and Wu, Chenglin and others},
  booktitle={Forty-third International Conference on Machine Learning},
  year={2026}
}

@inproceedings{yao2026ace,
  title={ACE-Router: Generalizing History-Aware Routing from MCP Tools to the Agent Web},
  author={Yao, Zhiyuan and Xu, Zishan and Guo, Yifu and Han, Zhiguang and Yang, Cheng and Zhang, Shuo and Zhang, Weinan and Zeng, Xingshan and Liu, Weiwen},
  booktitle={Proceedings of the 64th Annual Meeting of the Association for Computational Linguistics (Volume 1: Long Papers)},
  pages={6224--6240},
  year={2026}
}

@inproceedings{wu2025agentic,
  title={Agentic reasoning: A streamlined framework for enhancing llm reasoning with agentic tools},
  author={Wu, Junde and Zhu, Jiayuan and Liu, Yuyuan and Xu, Min and Jin, Yueming},
  booktitle={Proceedings of the 63rd Annual Meeting of the Association for Computational Linguistics (Volume 1: Long Papers)},
  pages={28489--28503},
  year={2025}
}

@inproceedings{zhao2025cola,
  title={COLA: Collaborative Multi-Agent Framework with Dynamic Task Scheduling for GUI Automation},
  author={Zhao, Di and Ma, Longhui and Wang, Siwei and Wang, Miao and Lv, Zhao},
  booktitle={Proceedings of the 2025 Conference on Empirical Methods in Natural Language Processing},
  pages={4570--4593},
  year={2025}
}

@inproceedings{wang2026always,
    title = "Do We Always Need Query-Level Workflows? Rethinking Agentic Workflow Generation for Multi-Agent Systems",
    author = "Wang, Zixu  and
      Xu, Bingbing  and
      Yuan, Yige  and
      Shen, Huawei  and
      Cheng, Xueqi",
    editor = "Liakata, Maria  and
      Moreira, Viviane P.  and
      Zhang, Jiajun  and
      Jurgens, David",
    booktitle = "Findings of the {A}ssociation for {C}omputational {L}inguistics: {ACL} 2026",
    month = jul,
    year = "2026",
    address = "San Diego, California, United States",
    publisher = "Association for Computational Linguistics",
    url = "https://aclanthology.org/2026.findings-acl.254/",
    doi = "10.18653/v1/2026.findings-acl.254",
    pages = "5149--5165",
    ISBN = "979-8-89176-395-1"
}

@inproceedings{wang2026fusionflow,
  title={FusionFlow: Enabling Deep Structural Exploration for Automated Agentic Workflow Generation},
  author={Wang, Xiang and Yang, Zongtao and Hong, Zhuojian and Zhang, Shuhao and Wei, Wei},
  booktitle={Proceedings of the 64th Annual Meeting of the Association for Computational Linguistics (Volume 1: Long Papers)},
  pages={27718--27760},
  year={2026}
}

@inproceedings{dekoninck2025unified,
  title={A Unified Approach to Routing and Cascading for LLMs},
  author={Dekoninck, Jasper and Baader, Maximilian and Vechev, Martin},
  booktitle={International Conference on Machine Learning},
  pages={12987--13010},
  year={2025},
  organization={PMLR}
}

@inproceedings{faghih2025tool,
  title={Tool Preferences in Agentic LLMs are Unreliable},
  author={Faghih, Kazem and Wang, Wenxiao and Cheng, Yize and Bharti, Siddhant and Sriramanan, Gaurang and Balasubramanian, Sriram and Hosseini, Parsa and Feizi, Soheil},
  booktitle={Proceedings of the 2025 Conference on Empirical Methods in Natural Language Processing},
  pages={20965--20980},
  year={2025}
}

@inproceedings{li2025adaptive,
  title={Adaptive tool use in large language models with meta-cognition trigger},
  author={Li, Wenjun and Li, Dexun and Dong, Kuicai and Zhang, Cong and Zhang, Hao and Liu, Weiwen and Wang, Yasheng and Tang, Ruiming and Liu, Yong},
  booktitle={Proceedings of the 63rd Annual Meeting of the Association for Computational Linguistics (Volume 1: Long Papers)},
  pages={13346--13370},
  year={2025}
}

@inproceedings{zhang2025toolexpnet,
  title={Toolexpnet: Optimizing multi-tool selection in llms with similarity and dependency-aware experience networks},
  author={Zhang, Zijing and Chen, Zhanpeng and Zhu, He and Chen, Ziyang and Du, Nan and Li, Xiaolong},
  booktitle={Findings of the Association for Computational Linguistics: ACL 2025},
  pages={15706--15722},
  year={2025}
}

@inproceedings{ding2025best,
  title={BEST-Route: Adaptive LLM Routing with Test-Time Optimal Compute},
  author={Ding, Dujian and Mallick, Ankur and Zhang, Shaokun and Wang, Chi and Madrigal, Daniel and Garcia, Mirian Del Carmen Hipolito and Xia, Menglin and Lakshmanan, Laks VS and Wu, Qingyun and R{\"u}hle, Victor},
  booktitle={International Conference on Machine Learning},
  pages={13870--13884},
  year={2025},
  organization={PMLR}
}

@inproceedings{xiang2026llm,
  title={LLM-as-Scheduler: Agentic Workflow Dynamic Scheduling},
  author={Xiang, Dawei and Chu, Kexin and Xu, Wenyan and Zhang, Wenhui and Zhang, Wei},
  booktitle={Proceedings of the 64th Annual Meeting of the Association for Computational Linguistics (Volume 1: Long Papers)},
  pages={12752--12763},
  year={2026}
}

@inproceedings{liu2026evolving,
  title={Evolving Agentic Workflow Driven by Human-Agent Collaboration},
  author={Liu, Yuxin and Zhang, Jinxuan and Peng, Yuezhang and Zhou, Hefeng and Wang, Xiangfeng and Lou, Jiong and Wu, Chentao and Li, Jie and Qu, Jingjing and Lu, Chaochao},
  booktitle={Findings of the Association for Computational Linguistics: ACL 2026},
  pages={24960--24969},
  year={2026}
}

@article{nayan2026multi,
  title={Multi-Agent Routing as Set-Valued Prediction: A WildChat Benchmark and Cost-Aware Evaluation},
  author={Nayan Bala, Ananto and Shah, Faisal Muhammad},
  journal={arXiv e-prints},
  pages={arXiv--2606},
  year={2026}
}

@article{zhong2026sod,
  title={Sod: Step-wise on-policy distillation for small language model agents},
  author={Zhong, Qiyong and Zheng, Mao and Song, Mingyang and Lin, Xin and Sun, Jie and Jiang, Houcheng and Wang, Xiang and Fang, Junfeng},
  journal={arXiv preprint arXiv:2605.07725},
  year={2026}
}

@inproceedings{zhang2026fast,
  title={Fast and effective on-policy distillation from reasoning prefixes},
  author={Zhang, Dongxu and Yang, Zhichao and Janghorbani, Sepehr and Han, Jun and Ressler II, Andrew and Qian, Qian and Lyng, Gregory D and Batra, Sanjit Singh and Tillman, Robert E},
  booktitle={Findings of the Association for Computational Linguistics: ACL 2026},
  pages={25553--25569},
  year={2026}
}

@inproceedings{zhang2025beyond,
  title={Beyond Bradley-Terry Models: A General Preference Model for Language Model Alignment},
  author={Zhang, Yifan and Zhang, Ge and Wu, Yue and Xu, Kangping and Gu, Quanquan},
  booktitle={International Conference on Machine Learning},
  pages={76939--76965},
  year={2025},
  organization={PMLR}
}

@inproceedings{cho2025policy,
  title={Policy-labeled Preference Learning: Is Preference Enough for RLHF?},
  author={Cho, Taehyun and Ju, Seokhun and Han, Seungyub and Kim, Dohyeong and Lee, Kyungjae and Lee, Jungwoo},
  booktitle={International Conference on Machine Learning},
  pages={10524--10553},
  year={2025},
  organization={PMLR}
}

@article{yang2026opid,
  title={Opid: On-policy skill distillation for agentic reinforcement learning},
  author={Yang, Shuo and Wu, Jinyang and Lu, Zhengxi and Shen, Yuhao and Zhang, Fan and Feng, Lang and Zhang, Shuai and Luo, Haoran and Lian, Zheng and Wen, Zhengqi and others},
  journal={arXiv preprint arXiv:2606.26790},
  year={2026}
}

@article{wu2026seed,
  title={SEED: Self-Evolving On-Policy Distillation for Agentic Reinforcement Learning},
  author={Wu, Jinyang and Yang, Shuo and Lu, Zhengxi and Zhang, Fan and Shen, Yuhao and Feng, Lang and Luo, Haoran and Lian, Zheng and Zhang, Shuai and Wen, Zhengqi and others},
  journal={arXiv preprint arXiv:2607.14777},
  year={2026}
}

\newpage

\appendix

\section{Proof of Theorem~\ref{thm_preference_subset_stability}}
\begin{proof}
We first bound the conditional utility estimation error using the local strong convexity of the anchored Bradley--Terry risk, and then propagate this error through the autoregressive factorization to bound the discrepancy between the oracle and learned subset policies. For a candidate vector $u$, the Hessian of the conditional Bradley--Terry risk is
\begin{align}
\nabla^2\mathcal{R}_c(u) \!=\!\!\! \sum_{(a,b)\in\mathcal{E}_c}\!\! w_{ab}(c)\sigma(u_a-u_b)(1 \!-\! \sigma(u_a \!-\! u_b))b_{ab}b_{ab}^{\top}.
\label{eq:theory-bt-hessian}
\end{align}

Along the line segment between $u_c^\star$ and $\widehat u_c$, the bounded-logit assumption implies
\begin{align}
\sigma(u_a-u_b)
\bigl(1-\sigma(u_a-u_b)\bigr)
\geq
\omega_B.
\end{align}

Consequently,
\begin{align}
\nabla^2\mathcal{R}_c(u)
\succeq
\omega_B L_c
\succeq
\omega_B\gamma_c I
\label{eq:theory-risk-strong-convexity}
\end{align}
throughout this segment. The population model is correctly specified, so $\nabla\mathcal{R}_c(u_c^\star)=0$. Let $\Delta_c=\widehat u_c-u_c^\star$. By the fundamental theorem of calculus,
\begin{align}
\! \nabla\mathcal{R}_c(\widehat u_c) \!-\! \nabla\mathcal{R}_c(u_c^\star) \!=\! \left[\int_0^1 \!\! \nabla^2\mathcal{R}_c (u_c^\star \!+\! t\Delta_c) dt \right]\Delta_c. \!
\label{eq:theory-gradient-integral}
\end{align}

Taking the inner product with $\Delta_c$ and applying Eq.~\eqref{eq:theory-risk-strong-convexity} yields
\begin{align}
\omega_B\gamma_c\|\Delta_c\|_2^2
&\leq
\left\langle
\nabla\mathcal{R}_c(\widehat u_c)
-
\nabla\mathcal{R}_c(u_c^\star),
\Delta_c
\right\rangle \nonumber
\\
&\leq
\left\|
\nabla\mathcal{R}_c(\widehat u_c)
\right\|_2
\|\Delta_c\|_2=
\rho_c\|\Delta_c\|_2,
\label{eq:theory-utility-error-intermediate}
\end{align}
where the second inequality is Cauchy--Schwarz. If $\Delta_c=0$, $\| \widehat u_c-u_c^\star\|_2 \leq \frac{\rho_c}{\omega_B\gamma_c}$ holds trivially; otherwise, dividing by $\omega_B\gamma_c\|\Delta_c\|_2$ proves it. This establishes the first claim of the theorem. We next translate the conditional utility error into a discrepancy between the corresponding autoregressive include-or-skip decisions. At canonical position $i$ and prefix $S_{t,<i}$, let $p_i^\star=\sigma(u_{(s,S_{t,<i})}^\star(k_i))$, $\widehat p_i=\sigma(\widehat u_{(s,S_{t,<i})}(k_i))$. Since $\sup_x\sigma'(x)=1/4$, we have
\begin{align} \label{eq:theory-conditional-policy-perturbation}
\left|p_i^\star-\widehat p_i\right|
&\leq
\frac{1}{4}
\left|
u_{(s,S_{t,<i})}^\star(k_i)
-
\widehat u_{(s,S_{t,<i})}(k_i)
\right| \nonumber
\\
&\leq
\frac{1}{4}
\frac{
\rho_{(s,S_{t,<i})}
}{
\omega_B\gamma_{(s,S_{t,<i})}
}.
\end{align}

For Bernoulli decisions, $\left|p_i^\star-\widehat p_i\right|$ equals the conditional total-variation distance. It remains to aggregate these conditional discrepancies across the $K$ autoregressive decisions. To this end, we use a standard hybrid argument. Define hybrid subset distributions $M_0,\ldots,M_K$, where $M_i$ uses the learned teacher conditionals for the first $i$ canonical decisions and the oracle conditionals thereafter. Thus, $M_0=\pi^\star$ and $M_K=\pi_T$. The triangle inequality gives
\begin{align}
D_{\mathrm{TV}}(\pi^\star,\pi_T)
\leq
\sum_{i=1}^{K}
D_{\mathrm{TV}}(M_{i-1},M_i).
\label{eq:theory-preference-hybrid-triangle}
\end{align}

Adjacent hybrids share the learned teacher prefix distribution $d_i^{\pi_T}(\cdot\mid s)$ and the same oracle continuation after position $i$. Contractivity of total variation under this common continuation kernel therefore gives
\begin{align}
\begin{split}
D_{\mathrm{TV}}(M_{i-1},M_i)
\leq
\mathbb{E}_{S_{t,<i}\sim d_i^{\pi_T}(\cdot\mid s)}
\left[
\left|p_i^\star-\widehat p_i\right|
\right].
\end{split}
\label{eq:theory-preference-adjacent-hybrid}
\end{align}

By substituting Eq.~\eqref{eq:theory-conditional-policy-perturbation} into Eq.~\eqref{eq:theory-preference-adjacent-hybrid} and summing over $i$ proves Eq.~\eqref{eq:theory-preference-subset-bound}.
\end{proof}

\section{Proof of Theorem~\ref{thm_student_prefix_composition}}
\begin{proof}
Fix $s$ and let $z_{1:K}$ be the canonical binary construction sequence corresponding to a capability subset. For each $i\in\{0,\ldots,K\}$, define a hybrid sequence distribution
\begin{align}
H_i(z_{1:K}\mid s)
=
\left[
\prod_{j=1}^{i}
Q_j^s(z_j\mid S_{<j})
\right]
\left[
\prod_{j=i+1}^{K}
P_j^s(z_j\mid S_{<j})
\right].
\label{eq:theory-hybrid-distribution}
\end{align}

The empty product convention gives $H_0=P^s$ and $H_K=Q^s$. By the triangle inequality for total variation,
\begin{align}
D_{\mathrm{TV}}(P^s,Q^s)
\leq
\sum_{i=1}^{K}
D_{\mathrm{TV}}(H_{i-1},H_i).
\label{eq:theory-hybrid-triangle}
\end{align}

The two adjacent hybrids use the same student conditionals for positions $1,\ldots,i-1$, so their common prefix is distributed as $d_i^{\pi_\theta}(\cdot\mid s)$. Conditioned on a fixed prefix $S_{t,<i}$, they differ only at position $i$: $H_{i-1}$ uses $P_i^s$, whereas $H_i$ uses $Q_i^s$. Both hybrids use the same teacher continuation kernel after position $i$. Applying the same continuation kernel cannot increase total variation, and therefore $D_{\mathrm{TV}}(H_{i-1},H_i)
\leq
\mathbb{E}_{S_{t,<i}\sim d_i^{\pi_\theta}(\cdot\mid s)}
\left[
D_{\mathrm{TV}}\left(
P_i^s(\cdot\mid S_{t,<i}),
Q_i^s(\cdot\mid S_{t,<i})
\right)
\right]$. By substituting it into Eq.~\eqref{eq:theory-hybrid-triangle} proves the first inequality in Eq.~\eqref{eq:theory-local-global-bound}. For the second inequality, Pinsker's inequality gives, for every prefix,
\begin{align}
D_{\mathrm{TV}}\!\left(P_i^s,Q_i^s\right)
\leq
\sqrt{
\frac{1}{2}
D_{\mathrm{KL}}\!\left(P_i^s\Vert Q_i^s\right)
}.
\label{eq:theory-conditional-pinsker}
\end{align}

Taking expectations over student prefixes and applying Jensen's inequality yields
\begin{align}
\mathbb{E}_{d_i^{\pi_\theta}}
\left[
D_{\mathrm{TV}}\!\left(P_i^s,Q_i^s\right)
\right]
\leq
\sqrt{\frac{\delta_i(s)}{2}}.
\label{eq:theory-expected-conditional-tv}
\end{align}

Summing Eq.~\eqref{eq:theory-expected-conditional-tv} over $i$ proves the second inequality. Finally, Cauchy--Schwarz gives
\begin{align}
\sum_{i=1}^{K}\sqrt{\frac{\delta_i(s)}{2}}
\leq
\sqrt{
K\sum_{i=1}^{K}\frac{\delta_i(s)}{2}
},
\end{align}
which proves the final inequality.
\end{proof}

\section{Proof of Theorem~\ref{thm_nested_performance}}
\begin{proof}
For each position $i$, define the deployment conditional KL as
\begin{align}
\bar\delta_i
=
\mathbb{E}_{s\sim\bar d^{\pi_\theta}}
\mathbb{E}_{S_{t,<i}\sim d_i^{\pi_\theta}(\cdot\mid s)}
\left[
D_{\mathrm{KL}}\!\left(
P_i^s(\cdot\mid S_{t,<i})
\Vert
Q_i^s(\cdot\mid S_{t,<i})
\right)
\right].
\label{eq:theory-deployment-conditional-kl}
\end{align}

By introduce the density ratios $r_{\mathrm{env}}(s)
=
\frac{\bar d^{\pi_\theta}(s)}{\nu(s)}$ and $r_{\mathrm{set},i}(S_{t,<i}\mid s)
=
\frac{
d_i^{\pi_\theta}(S_{t,<i}\mid s)
}{
q_i(S_{t,<i}\mid s)
}$, the changing measure from the deployment distributions to the training distributions gives
\begin{align}
\bar\delta_i
=&\ 
\mathbb{E}_{s\sim\nu}
\mathbb{E}_{S_{t,<i}\sim q_i(\cdot\mid s)}
\Big[
r_{\mathrm{env}}(s) 
r_{\mathrm{set},i}(S_{t,<i}\mid s) \nonumber
\\
&\cdot
D_{\mathrm{KL}}\!\left(
P_i^s(\cdot\mid S_{t,<i})
\Vert
Q_i^s(\cdot\mid S_{t,<i})
\right)
\Big] \leq
C_{\mathrm{e}}C_{\mathrm{s},i}\epsilon_i.
\label{eq:theory-change-of-measure}
\end{align}

Applying the first two inequalities of Theorem~\ref{thm_student_prefix_composition} at each state and averaging over $\bar d^{\pi_\theta}$ gives
\begin{align}
\mathbb{E}_{s\sim\bar d^{\pi_\theta}}
[
& D_{\mathrm{TV}}\!\left(
\pi_T(\cdot\mid s),
\pi_\theta(\cdot\mid s)
\right)
]\leq
\sum_{i=1}^{K}
\mathbb{E}_{s\sim\bar d^{\pi_\theta}}
\left[
\sqrt{\frac{\delta_i(s)}{2}}
\right] \nonumber
\\
&\quad\leq
\sum_{i=1}^{K}
\sqrt{\frac{\bar\delta_i}{2}}
\leq
\sum_{i=1}^{K}
\sqrt{
\frac{
C_{\mathrm{e}}C_{\mathrm{s},i}\epsilon_i
}{2}
},
\label{eq:theory-average-subset-tv}
\end{align}
where the second inequality follows from Jensen's inequality. Since total variation is at most one, the left-hand side is also bounded by the minimum appearing in Eq.~\eqref{eq:theory-nested-performance}. It remains to translate subset discrepancy into return discrepancy. The finite-horizon performance-difference identity, expanded along the student state distribution, is
\begin{align}
\begin{split}
J(\pi_\theta) \!-\! J(\pi_T) \!=\! \sum_{t=0}^{T-1}
\mathbb{E}_{s\sim d_t^{\pi_\theta}}
\mathbb{E}_{G\sim\pi_\theta(\cdot\mid s)} [A_t^{\pi_T}(s,G)].
\end{split}
\label{eq:theory-performance-difference}
\end{align}

For every state, $\mathbb{E}_{G\sim\pi_T(\cdot\mid s)}
\left[
A_t^{\pi_T}(s,G)
\right]
=0$. Subtracting this zero term and using the dual characterization of total variation together with $|A_t^{\pi_T}(s,G)| \leq \bar A_T$ yields
\begin{align}
&\ |\mathbb{E}_{G\sim\pi_\theta}[A_t^{\pi_T}(s,G)]| \nonumber \\
=
&\ \left|
\mathbb{E}_{G\sim\pi_\theta}
[A_t^{\pi_T}(s,G)]
-
\mathbb{E}_{G\sim\pi_T}
[A_t^{\pi_T}(s,G)]
\right| \nonumber \\
\leq 
&\ 2\bar A_T
D_{\mathrm{TV}}\!\left(
\pi_T(\cdot\mid s),
\pi_\theta(\cdot\mid s)
\right).
\label{eq:theory-advantage-tv}
\end{align}

Taking absolute values in Eq.~\eqref{eq:theory-performance-difference}, applying Eq.~\eqref{eq:theory-advantage-tv}, and using the definition of $\bar d^{\pi_\theta}$ gives
\begin{align}
\left|
J(\pi_\theta)-J(\pi_T)
\right|
&\leq
2\bar A_T
\sum_{t=0}^{T-1}
\mathbb{E}_{s\sim d_t^{\pi_\theta}}
\left[
D_{\mathrm{TV}}(\pi_T,\pi_\theta)
\right] \nonumber
\\
&=
2T\bar A_T
\mathbb{E}_{s\sim\bar d^{\pi_\theta}}
\left[
D_{\mathrm{TV}}(\pi_T,\pi_\theta)
\right].
\label{eq:theory-return-via-tv}
\end{align}

Substituting Eq.~\eqref{eq:theory-average-subset-tv} and the unit upper bound on total variation proves Eq.~\eqref{eq:theory-nested-performance}.
\end{proof}

\section{Algorithm} \label{app:training_algorithm}

\textsc{CoCA} alternates between conditional utility learning and student-policy optimization. At each outer iteration, the student collects trajectories and visited states, from which student-induced partial subsets are extracted for preference collection. The resulting decision contexts are evaluated under the fixed continuation policy $\pi_{\theta_n}$ to update $v_\phi$, which induces $\pi_T$. The student is then optimized on the visited states using mixed-prefix distillation and trajectory-level policy optimization. The two models are updated separately: $\mathcal{L}_{\mathrm{p}}$ updates $\phi$, while $\mathcal{L}_{\mathrm{PG}}+\alpha\mathcal{L}_{\mathrm{D}}$ updates $\theta$.

Let $K$ be the number of capabilities and $T$ the trajectory length. The autoregressive allocator performs exactly $K$ conditional policy evaluations per stage and $TK$ per trajectory, excluding capability execution and model-dependent forward costs. During training, let $N_s$ be the number of sampled states, $N_p$ the average number of sampled partial subsets per state, and $Q$ the number of comparisons per state--subset condition. Preference collection requires $\mathcal{O}(N_sN_pQ)$ teacher queries, avoiding enumeration of the $2^K$ subset space. Distillation evaluates one local KL term at each autoregressive position and therefore requires $\mathcal{O}(K)$ decision-level evaluations per sampled state. At inference, only $\pi_\theta$ is retained; no teacher queries, utility-model evaluations, preference collection, or online updates are required.


\begin{algorithm}[t]
\caption{Training Procedure of \textsc{CoCA}}
\label{alg:coca_training}
\begin{algorithmic}[1]
\REQUIRE Capability library $\mathcal{K}$, student policy $\pi_\theta$, utility model $v_\phi$, preference teacher $\mathcal{O}$
\STATE Initialize $\theta$ and $\phi$
\FOR{each outer iteration $n$}

    \STATE Roll out $\pi_{\theta_n}$ and collect trajectories $\xi$
    
    \FOR{each visited state $s_t$}
        \FOR{each autoregressive position $i=1,\ldots,K$}
            \STATE Record the student-induced prefix $S_{t,<i}$
        \ENDFOR
    \ENDFOR
    
    \STATE Query $\mathcal{O}$ on collected $(s_t,S_{t,<i})$ contexts under fixed $\pi_{\theta_n}$
    \STATE Update $v_\phi$ by minimizing $\mathcal{L}_{p}$

    \STATE Construct teacher policy $\pi_T$ from $v_\phi$
    \STATE Set mixture coefficient $\eta$

    \FOR{each collected state $s_t$}
        \FOR{each capability position $i=1,\ldots,K$}
            \STATE Generate prefix using
            $\eta\pi_T+(1-\eta)\pi_{\theta_n}$
        \ENDFOR
    \ENDFOR

    \STATE Compute $\mathcal{L}_{\mathrm{D}}$ and $\mathcal{L}_{\mathrm{PG}}$
    \STATE Update $\theta$ by minimizing
    $\mathcal{L}_{\mathrm{PG}}+\alpha\mathcal{L}_{\mathrm{D}}$

\ENDFOR
\RETURN $\pi_\theta$
\end{algorithmic}
\end{algorithm}

\section{Canonical Capability Ordering} \label{app:canonical_set_construction}

The capabilities in $\mathcal{K}$ may exhibit dependencies induced by their information flow. We represent these relations through a predefined dependency-aware protocol and obtain a fixed canonical order $k_1 \prec k_2 \prec \cdots \prec k_K$ by linearizing the corresponding partial order. Capabilities without explicit dependencies are ordered using a deterministic rule, and the same order is used by the teacher and student throughout training and inference.

The canonical order defines the autoregressive factorization of the subset action rather than an intrinsic ranking of capability importance. The policy is not restricted to selecting a prefix: every subset $G\subseteq\mathcal{K}$ is uniquely represented by $(z_1,\ldots,z_K)$, where $z_{t,i}=\mathbb{I}[k_i\in G]$, and can be constructed through the include-or-skip decisions in Section~\ref{sec:autoregressive-policy}. Conditioning on the selected prefix captures complementarity and redundancy while preserving the complete subset space. During execution, unselected capabilities are omitted and independent capabilities may be scheduled in parallel, while the dependency relations remain respected.

\section{Preference Collection} \label{app:conditional_preference_collection}

The comparison types, labels, and Bradley--Terry objective follow Section~\ref{sec:conditional-utility}. Each comparison is conditioned on a state $s$ and partial subset $S$, since a capability's marginal value depends on both the current demand and the capabilities already selected. Capability-versus-null comparisons supervise include-or-skip decisions, whereas capability-versus-capability comparisons distinguish the relative marginal values of non-null candidates. The comparison contexts are drawn from student-generated trajectories and autoregressive prefixes and are refreshed as the student policy evolves. Within each outer iteration, the continuation policy is fixed so that candidates are evaluated under the same downstream behavior. The observed comparisons form a condition-specific graph anchored by the null alternative. The anchor removes additive utility ambiguity, while sufficient connectivity improves utility identifiability and teacher-policy stability. As characterized by Theorem~\ref{thm_preference_subset_stability}, the resulting policy error depends on both preference-fitting accuracy and the conditioning of the comparison graph through $\gamma_c$.

\subsection{Teacher Utility Estimation}
The theoretical marginal advantage $A_{k,\mathrm{task}}^\pi(s,S)$ measures the utility change induced by activating an additional capability under a given state and partial capability set. In practice, we approximate this cost-adjusted marginal utility through teacher-guided counterfactual evaluation.

Specifically, we use GPT-5.5 as the teacher model to evaluate candidate capability choices under identical task contexts. Given a state $s$ and a partial capability set $S$, the teacher assesses the relative utility between activating a candidate capability $k$ and keeping the current subset unchanged. The evaluation considers the same task description, execution history, and available capability outputs, ensuring that the comparison reflects the marginal contribution of the candidate capability. For a candidate capability $k$, we compute the estimated cost-adjusted utility difference:
\begin{align}
\Delta U_{k,\lambda}(s,S) = \hat U_{k,\lambda}(s,S)-\hat U_{\varnothing,\lambda}(s,S),
\end{align}
where $\hat U_{k,\lambda}$ denotes the teacher-estimated utility and the null option provides the zero-utility reference following the null-anchored formulation. Pairwise preference labels are then constructed according to the relative utility:
\begin{align}
y=
\begin{cases}
1, & \Delta U_{i,\lambda}>\Delta U_{j,\lambda},\\
0, & \Delta U_{i,\lambda}<\Delta U_{j,\lambda},\\
0.5, & \Delta U_{i,\lambda}=\Delta U_{j,\lambda}.
\end{cases}
\end{align}

We further assign a confidence weight $w$ according to the utility margin, where larger margins indicate more reliable preferences and near-tie comparisons receive lower weights. These weighted preference pairs are used to train the conditional utility model $v_\phi$, which distills teacher evaluations into a deployable capability utility estimator.

\section{Detailed Dataset Descriptions}

We evaluate \textsc{CoCA} on four benchmarks covering complementary forms of multimodal interaction, tool-augmented reasoning, and long-horizon task execution. 

\textbf{OSWorld} provides realistic computer-use tasks in operating-system environments, where agents interact with graphical user interfaces to complete multi-step objectives. Successful execution requires visual perception, interface grounding, action planning, and persistent interaction with applications, making OSWorld particularly suitable for evaluating stage-wise capability allocation under changing observations and execution states. 

\textbf{VisualWebArena} evaluates multimodal web agents on realistic website navigation and interaction tasks. Agents must interpret visually rendered webpages, locate relevant interface elements, retrieve information, and execute sequential actions across multiple pages, testing the coordination of perception, retrieval, reasoning, and action capabilities.

\textbf{GAIA} evaluates general-purpose AI assistants on real-world questions that often require multi-step reasoning, external information acquisition, and the use of heterogeneous tools. Its tasks vary substantially in the capabilities needed for successful completion, providing a natural setting for assessing whether an allocator can activate specialized capabilities selectively rather than relying on a fixed tool configuration. 

\textbf{MMMU-Pro} focuses on expert-level multimodal reasoning over complex visual and textual inputs. The benchmark includes questions that require fine-grained visual understanding, domain knowledge, and multi-step inference, offering a complementary evaluation of capability allocation in non-interactive but reasoning-intensive scenarios.

Together, these benchmarks span interactive and non-interactive environments, visual grounding and abstract reasoning, and short- to long-horizon decision processes. This diversity allows us to evaluate whether \textsc{CoCA} can adapt capability subsets to heterogeneous task demands while balancing task performance, activation cost, and allocation stability.

\section{Comparison Baselines}

We compare \textsc{CoCA} with representative baselines covering reasoning--action interaction, learned tool invocation, multi-agent orchestration, efficient tool selection, and large-scale toolchain planning. 

Since existing baselines are originally designed for different tool-use or agent orchestration scenarios rather than explicit subset-based capability allocation, we adapt them to our unified evaluation protocol for fair comparison. Specifically, all methods operate on the same capability library with $K=8$ capabilities, identical stage definitions, and the same execution environment. Each baseline is required to produce a capability subset at each stage, which is then executed by the shared executor and evaluated using identical task and allocation metrics. The specific adaptation procedure of each baseline is described below.

\textsc{ReAct}~\citep{yao2023react} integrates reasoning and acting within a unified iterative trajectory. At each step, the language model generates an intermediate reasoning trace to update its task understanding and action plan, and then executes an environment action or invokes an external tool to obtain new observations. These observations are incorporated into subsequent reasoning, enabling the agent to revise its plan and recover from intermediate failures. ReAct is a strong general-purpose baseline for interactive and long-horizon tasks because it supports closed-loop reasoning over dynamically changing environments. However, tool or capability decisions are made sequentially by the language model at individual interaction steps, without explicitly representing a stage-wise subset action or modeling the conditional utility among jointly activated capabilities. For fair comparison, we adapt ReAct by treating the tool invocation decisions generated within each stage as the activated capability subset under the unified execution protocol.

\textsc{Toolformer}~\citep{schick2023toolformer} trains a language model to use external APIs through self-supervised tool-use annotations. Starting from a small number of demonstrations for each API, it samples candidate tool calls in unlabeled text, executes the corresponding APIs, and retains calls that improve the model's language-modeling objective. The resulting model learns when a tool should be invoked, which tool to invoke, how to generate its arguments, and how to incorporate the returned result into subsequent token generation. Toolformer therefore provides a representative learned tool-selection baseline that avoids manually labeling every tool call. Nevertheless, its decisions are embedded into token generation and primarily concern individual API invocations, rather than dynamically allocating a cost-sensitive subset of interdependent capabilities at each stage of a long-horizon trajectory. To adapt Toolformer to our setting, we use its learned tool invocation preference to determine capability activation candidates within each stage, and convert the selected invocations into executable capability subsets under the same protocol.

\textsc{Puppeteer}~\citep{dang2025multi} adopts a centralized orchestration paradigm for coordinating multiple specialized agents. A central orchestrator, referred to as the puppeteer, observes the evolving task state and dynamically determines which agent should act next and how participating agents should be prioritized. The orchestrator is trained through reinforcement learning, allowing the collaboration structure to evolve according to task progress instead of following a fixed multi-agent workflow. By selectively sequencing agents, Puppeteer can reduce redundant coordination and form compact reasoning structures for complex problem solving. It therefore serves as a strong baseline for adaptive multi-agent orchestration. In contrast, Puppeteer primarily learns a sequential agent schedule, whereas \textsc{CoCA} treats each stage as a subset-valued decision and explicitly models how the marginal value of one capability depends on the other capabilities selected within the same stage. Under our unified setting, the agents selected by Puppeteer's orchestration policy at each stage are regarded as the activated capability subset and executed through the shared executor.

\textsc{AutoTool}~\citep{jia2026autotool} targets the high inference overhead caused by repeatedly asking an LLM to select tools. It identifies tool-usage inertia, namely that tool invocations in historical agent trajectories often follow recurring sequential patterns. Based on this observation, AutoTool constructs a directed tool graph in which nodes correspond to tools and edges encode observed transition probabilities and data-flow relations. The graph is further augmented with parameter-level information to support tool argument generation. During inference, the agent traverses this structured representation to select tools and fill their parameters with fewer repeated LLM calls. AutoTool is therefore an efficiency-oriented baseline that exploits historical regularities in tool sequences. However, its selection mechanism mainly predicts subsequent tools from learned transition patterns and does not directly estimate state- and subset-conditioned marginal utilities or optimize a complete capability subset under task and allocation objectives. We adapt AutoTool by mapping its tool selection process to capability routing, where the selected tools at each stage form the corresponding capability subset for evaluation under the shared execution protocol.

\textsc{NaviAgent}~\citep{jiang2026naviagent} addresses large-scale tool orchestration through a bilevel planning framework. At the high level, an LLM-based planner determines the appropriate interaction mode, such as responding directly, clarifying the task, invoking a toolchain, or processing tool outputs. At the execution level, a Tool World Navigation Model represents structural and behavioral relations among tools as an evolving navigation graph and guides the construction of multi-step invocation sequences. Feedback from actual tool interactions is used to refine the graph and improve subsequent planning, enabling NaviAgent to operate over large tool libraries while maintaining a global view of tool dependencies. It provides a strong dependency-aware baseline for toolchain construction. For comparison under our capability allocation setting, the tools selected by NaviAgent's planning process are converted into stage-wise capability subsets and evaluated with the same execution environment and cost accounting protocol.

\section{Metric Descriptions}

We evaluate \textsc{CoCA} from two complementary perspectives: task-level effectiveness and capability-allocation quality. The former measures whether the allocated capability subsets improve downstream task execution, while the latter evaluates whether the learned policy produces adaptive, efficient, and stable allocation behaviors.

\subsection{Task-Level Performance}
We follow the standard evaluation protocols of each benchmark. Specifically, we report execution success rate (\textit{Succ.}) for interactive environments including OSWorld and VisualWebArena, and quasi-exact-match accuracy (\textit{Acc.}) for reasoning-oriented benchmarks including GAIA and MMMU-Pro. In addition, we measure the normalized inference cost (\textit{Cost}) to evaluate deployment efficiency. For a trajectory $\xi=(s_0,G_0,\ldots,s_{T-1},G_{T-1},s_T)$, we compute the total inference token consumption across all activated capabilities and stages, and normalize it by the corresponding cost of the \textit{all-open} baseline under the same executor:
\begin{align}
\textit{Cost}
=
\frac{
\mathrm{Tokens}(\xi)
}{
\mathrm{Tokens}_{\mathrm{all-open}}(\xi)
}.
\end{align}

The resulting value lies in $[0,1]$, where lower values indicate more efficient capability utilization.

\subsection{Controlled Capability-Demand Shift Construction}

To evaluate whether a capability allocation policy can adapt to changing capability requirements, we construct controlled capability-demand shifts through counterfactual evaluation. Specifically, for each interaction stage $t$, we define the optimal capability subset under the current task condition as:
\begin{align}
G_t^\star
=
\arg\max_{G\subseteq\mathcal{K}}
\left[
Q_{\mathrm{task}}(s_t,G)
-\lambda \mathcal{C}_{A}(G)
\right],
\end{align}
where $Q_{\mathrm{task}}(s_t,G)$ denotes the expected task utility obtained by activating capability subset $G$ under state $s_t$, and $\mathcal{C}_{A}(G)$ denotes the corresponding activation cost. Since the capability library contains $K=8$ capabilities, we enumerate all possible subsets and select the subset with the highest cost-adjusted utility as the counterfactual optimal allocation.

A capability-demand shift is introduced by modifying the stage condition such that the optimal capability requirement changes across interaction stages. Specifically, we construct shifted states where newly introduced task information, observations, or user requirements alter the counterfactual optimal subset $G_t^\star$. All methods are evaluated under the same shifted conditions, and the resulting optimal subsets are used as ground-truth references for measuring demand adaptation.

\subsection{Allocation Quality}
Since task success alone does not reveal whether the policy learns effective capability allocation, we further evaluate the behavior of the generated capability subsets $\{G_t\}_{t=0}^{T-1}$. We report five allocation metrics.

\paragraph{Switching Cost (SwC)}
We measure allocation stability using the average symmetric difference between consecutive capability subsets:
\begin{align}
\mathrm{SwC}
=
\frac{1}{T-1}
\sum_{t=1}^{T-1}
|G_t\triangle G_{t-1}|,
\end{align}
where lower values indicate fewer unnecessary capability changes across interaction stages.

\paragraph{Demand-Shift Adaptation Accuracy (DSA)}
Under the controlled capability-demand shifts described above, we evaluate whether the selected subset matches the counterfactual optimal capability subset:
\begin{align}
\mathrm{DSA}
=
\frac{1}{T}
\sum_{t=0}^{T-1}
\mathbb{I}[G_t=G_t^\star],
\end{align}
where $G_t^\star$ is obtained through counterfactual subset evaluation.

\paragraph{Pairwise Preference Accuracy (PPA)}
We evaluate whether the learned utility model correctly predicts held-out conditional pairwise preferences:
\begin{align}
\mathrm{PPA}
=
\frac{1}{|\mathcal{D}_{\mathrm{test}}^{\neq}|}
\sum_{(s,S,a_i,a_j,y)\in\mathcal{D}_{\mathrm{test}}^{\neq}}
\mathbb{I}
[\hat y=y],
\end{align}
where $\mathcal{D}_{\mathrm{test}}^{\neq}$ denotes the subset of test comparisons with non-tie labels. The predicted preference $\hat y$ is obtained from $P_\phi(a_i\succ a_j|s,S)$. The evaluation set contains both capability-versus-capability and capability-versus-null comparisons, consistent with the preference construction process.

\paragraph{Teacher-Agreement Rate (TAR)}
We measure how closely the deployed student policy follows the utility-induced teacher policy:
\begin{align}
\mathrm{TAR}
=
\frac{1}{TK}
\sum_{t=0}^{T-1}
\sum_{i=1}^{K}
\mathbb{I}
[z_{t,i}^{\theta}=z_{t,i}^{T}],
\end{align}
where $z_{t,i}^{\theta}$ and $z_{t,i}^{T}$ denote the student and teacher include-or-skip decisions, respectively. The teacher decisions are obtained from the conditional utility model $v_\phi$ through Eq.~\eqref{teacher_policy}, which converts learned cost-adjusted marginal utilities into capability activation probabilities.

\paragraph{Utility--Gain Correlation (UGC)}
To evaluate whether the learned utility model captures the actual marginal benefits of capability activation, we compute the Spearman correlation between predicted conditional utility and realized cost-adjusted capability gains:
\begin{align}
\mathrm{UGC}
=
\rho_{\mathrm{Spearman}}
\left(
v_\phi(s,k\mid S),
\Delta U_\lambda(s,k\mid S)
\right),
\end{align}
where $\Delta U_\lambda(s,k\mid S) = \Delta R_{\mathrm{task}}(s,k\mid S)-\lambda c_k$ denotes the realized cost-adjusted marginal gain. The correlation is computed over capability-level samples collected from evaluation trajectories.

These metrics evaluate not only whether \textsc{CoCA} improves task outcomes, but also whether it learns capability subsets that are cost-efficient, demand-aware, preference-consistent, and stable throughout long-horizon execution.

\begin{table*}[t]
\centering
\renewcommand\arraystretch{1.0}
\resizebox{1.0\textwidth}{!}{
\begin{tabular}{c|ccccc|ccccc|ccccc} 
\toprule[1.2pt]
\multirow{2}{*}{\multirowcell{2}{\centering\textbf{Methods}}} & \multicolumn{5}{c|}{\centering\textbf{OSWorld}} & \multicolumn{5}{c|}{\centering\textbf{VisualWebArena}} & \multicolumn{5}{c}{\centering\textbf{GAIA}}  \\ \cmidrule[0.5pt](l{1pt}r{0pt}){2-16}

& SwC $\downarrow$ & DSA $\uparrow$ & PPA $\uparrow$ & TAR $\uparrow$ & UGC $\uparrow$ & SwC $\downarrow$ & DSA $\uparrow$ & PPA $\uparrow$ & TAR $\uparrow$ & UGC $\uparrow$ & SwC $\downarrow$ & DSA $\uparrow$ & PPA $\uparrow$ & TAR $\uparrow$ & UGC $\uparrow$ \\ \midrule[0.8pt]

\textsc{ReAct} & 3.52 & 0.49 & 0.50 & 0.46 & 0.08 & 3.35 & 0.43 & 0.46 & 0.45 & 0.06 & 2.85 & 0.51 & 0.52 & 0.48 & 0.13 \\

\textsc{Toolformer} & 3.24 & 0.52 & 0.55 & 0.52 & 0.19 & 2.95 & 0.51 & 0.51 & 0.49 & 0.16 & 2.53 & 0.55 & 0.58 & 0.53 & 0.20 \\

\textsc{Puppeteer} & 2.55 & 0.59 & 0.58 & 0.56 & 0.27 & 2.47 & 0.54 & 0.59 & 0.57 & 0.28 & 1.98 & 0.62 & 0.61 & 0.59 & 0.35 \\

\textsc{AutoTool} & 2.31 & 0.61 & 0.66 & 0.63 & 0.39 & 2.12 & 0.60 & 0.62 & 0.59 & 0.33 & 1.78 & 0.63 & 0.69 & 0.66 & 0.41 \\

\textsc{NaviAgent} & 1.89 & 0.68 & 0.69 & 0.66 & 0.46 & 1.88 & 0.64 & 0.67 & 0.65 & 0.45 & 1.42 & 0.70 & 0.72 & 0.69 & 0.51 \\ \midrule[0.8pt]

\textsc{CoCA}(Ours) & \textbf{0.91} & \textbf{0.83} & \textbf{0.82} & \textbf{0.80} & \textbf{0.66} & \textbf{0.86} & \textbf{0.80} & \textbf{0.80} & \textbf{0.77} & \textbf{0.63} & \textbf{0.67} & \textbf{0.86} & \textbf{0.85} & \textbf{0.82} & \textbf{0.69} \\

\bottomrule[1.2pt]
\end{tabular}}
\caption{Evaluation of allocation quality under Qwen3.6-27B. The best-performing method is marked by \textbf{bold}.}
\label{allocation_qwen}
\end{table*}

\subsection{Details of the Ablation Study}

We perform ablation studies to quantify the contribution of each component in \textsc{CoCA}. Specifically, we remove subset-conditioned utility modeling (\textit{w/o Set-Dep}), set-level on-policy distillation (\textit{w/o Set-On Policy}), stage-level on-policy distillation (\textit{w/o Stage-On Policy}), trajectory-level reinforcement learning (\textit{w/o Traj. RL}), and null anchoring (\textit{w/o Null-Anchor}).

\begin{itemize}
    \item \textit{w/o Set-Dep}. Removing subset-conditioned utility modeling consistently degrades both task performance and allocation quality. The decreases in DSA, PPA, TAR, and UGC indicate that independently evaluating capabilities cannot capture the changing marginal utility induced by previously selected capabilities. Without conditioning capability values on the current partial subset, the allocator cannot distinguish complementary capabilities that provide additional benefits from redundant capabilities with limited marginal gains. These results verify that capability utility should be modeled conditionally on the selected subset rather than as an isolated capability score.

    \item \textit{w/o Set-On Policy}. Removing set-level on-policy distillation leads to noticeable degradation in allocation-related metrics, particularly DSA, PPA, and TAR. This indicates that training the student on capability prefixes inconsistent with those encountered during its own autoregressive construction introduces prefix-level distribution mismatch. Without set-level on-policy guidance, the student is optimized under inaccurate prefix distributions, which leads to suboptimal subsequent capability selections.

    \item \textit{w/o Stage-On Policy}. Removing stage-level on-policy distillation results in consistent performance drops across task and allocation metrics. Since capability allocation evolves across stages, the states visited by the student may differ from those used for collecting teacher supervision. Without aligning teacher guidance with student-visited states, the transferred policy becomes less effective in handling the evolving allocation process. The degradation in TAR further demonstrates the importance of state-level distribution alignment for reliable policy transfer.

    \item \textit{w/o Traj. RL}. Removing trajectory-level reinforcement learning significantly increases SwC and reduces overall task performance. Although preference learning and distillation provide local capability-selection guidance, they do not directly optimize long-horizon allocation objectives over complete execution trajectories. Without trajectory-level optimization, the policy produces less stable capability transitions across stages, resulting in higher switching overhead and degraded allocation quality.

    \item \textit{w/o Null-Anchor}. Removing null anchoring mainly affects preference-related metrics, including PPA and UGC, while also reducing overall performance. Without the null alternative, the preference model lacks an explicit reference for skip decisions, leading to utility shift ambiguity when comparing activation choices. Consequently, the learned utility estimates become less calibrated for include-or-skip decisions, weakening the induced teacher policy and subsequent student optimization.
\end{itemize}

Overall, each component addresses a distinct challenge in combinatorial capability allocation. Subset-conditioned utility modeling captures context-dependent capability values under partial selections, dual-level on-policy distillation reduces prefix- and state-level distribution mismatch during policy transfer, trajectory-level reinforcement learning optimizes long-horizon allocation behavior, and null anchoring provides a calibrated reference for activation versus skipping decisions.

\begin{table*}[t]
\centering
\renewcommand\arraystretch{1.0}
\resizebox{1.0\textwidth}{!}{
\begin{tabular}{c|cccccc|cccccc} 
\toprule[1.2pt]
\multirow{2}{*}{\multirowcell{2}{\centering\textbf{Methods}}} & \multicolumn{6}{c|}{\centering\textbf{VisualWebArena}} & \multicolumn{6}{c}{\centering\textbf{MMMU-Pro}} \\ \cmidrule[0.5pt](l{1pt}r{0pt}){2-13}

& Succ. $\uparrow$ & SwC $\downarrow$ & DSA $\uparrow$ & PPA $\uparrow$ & TAR $\uparrow$ & UGC $\uparrow$ & Acc. $\uparrow$ & SwC $\downarrow$ & DSA $\uparrow$ & PPA $\uparrow$ & TAR $\uparrow$ & UGC $\uparrow$ \\ \cmidrule[0.8pt](l{1pt}r{0pt}){1-13}

w/o Set-Dep & 31.5 & 0.96 & 0.75 & 0.74 & 0.71 & 0.63 & 58.7 & 0.93 & 0.84 & 0.76 & 0.78 & 0.64 \\

w/o Set-On Policy & 33.2 & 0.90 & 0.80 & 0.77 & 0.74 & 0.65 & 60.1 & 0.86 & 0.87 & 0.81 & 0.82 & 0.69 \\

w/o Stage-On Policy & 32.4 & 0.93 & 0.78 & 0.80 & 0.77 & 0.65 & 61.0 & 0.90 & 0.85 & 0.83 & 0.82 & 0.70 \\

w/o Traj. RL & 33.1 & 1.49 & 0.83 & 0.82 & 0.80 & 0.68 & 61.6 & 1.31 & 0.89 & 0.86 & 0.84 & 0.72 \\

w/o Null-Anchor & 33.9 & 0.87 & 0.79 & 0.72 & 0.75 & 0.57 & 59.5 & 0.85 & 0.86 & 0.74 & 0.79 & 0.58 \\ \midrule[0.8pt]

\textsc{CoCA} & \textbf{35.6} & \textbf{0.79} & \textbf{0.85} & \textbf{0.83} & \textbf{0.81} & \textbf{0.69} & \textbf{62.5} & \textbf{0.84} & \textbf{0.90} & \textbf{0.88} & \textbf{0.86} & \textbf{0.75} \\

\bottomrule[1.2pt]
\end{tabular}}
\caption{Ablation study on the VisualWebArena and MMMU-Pro benchmarks under Kimi-K2.6.}
\label{ablation_vwa_mmmu}
\end{table*}

\begin{table}[t]
\centering
\renewcommand\arraystretch{1.0}
\resizebox{0.48\textwidth}{!}{
\begin{tabular}{c|ccccc} 
\toprule[1.2pt]

\makecell{\centering\textbf{Methods}} & \makecell{Avg. \\ Latency} & \makecell{LLM \\ Calls} & \makecell{Token \\ Cost (k)} & \makecell{Avg. Act. \\ /stage}  & \makecell{Train \\ Cost}  \\  \cmidrule[0.5pt](l{1pt}r{0pt}){1-6}

\textsc{ReAct} & 70.1 & 14.8 & 44.5 & 6.7 & 0.0 \\

\textsc{Toolformer} & 61.8 & 12.3 & 38.6 & 5.9 & 1.2 \\

\textsc{Puppeteer} & 53.6 & 10.6 & 34.1 & 5.1 & 0.0 \\

\textsc{AutoTool} & 48.5 & 9.4 & 31.2 & 4.6 & 3.5 \\

\textsc{NaviAgent} & 43.1 & 8.1 & 26.5 & 3.8 & 5.8 \\ \midrule[0.8pt]

\textsc{CoCA}(Ours) & \textbf{35.4} & \textbf{6.5} & \textbf{20.1} & \textbf{2.9} & 7.4 \\

\bottomrule[1.2pt]
\end{tabular}}
\caption{Efficiency on GAIA under Qwen3.6-27B.}
\label{latency_qwen}
\end{table}

\section{Additional Experimental Results}

\subsection{Allocation Quality}
Table~\ref{allocation_qwen} evaluates the quality of capability allocation beyond task-level performance. 
\textsc{CoCA} consistently achieves the best performance across all five allocation metrics on three benchmarks, demonstrating its ability to select capabilities more effectively and stably throughout long-horizon execution. Compared with existing routing-based baselines, \textsc{CoCA} substantially reduces switching cost while improving demand adaptation and preference consistency. This indicates that modeling capability values conditionally on the selected subset enables the policy to avoid redundant activations and better capture capability dependencies.

Existing baselines exhibit clear limitations in allocation quality. Methods such as \textsc{ReAct} and \textsc{Toolformer} achieve relatively low preference agreement and utility correlation, suggesting that sequential tool invocation without explicit capability valuation often leads to suboptimal selection decisions. Although \textsc{Puppeteer}, \textsc{AutoTool}, and \textsc{NaviAgent} improve allocation behavior through adaptive routing, they still rely on isolated capability selection or predefined routing heuristics, resulting in higher switching costs and weaker alignment with demand shifts. In contrast, \textsc{CoCA} jointly models subset-dependent capability utilities and performs on-policy distillation with trajectory-level optimization, enabling more accurate, stable, and deployment-aligned capability allocation.

\subsection{Ablation Study}
Table~\ref{ablation_vwa_mmmu} investigates the contribution of individual components in \textsc{CoCA}. Removing subset dependency modeling (\textsc{w/o Set-Dep}) leads to consistent degradation across both benchmarks, especially in PPA, TAR, and UGC, indicating that capability values cannot be reliably estimated independently without considering previously selected capabilities. This verifies the importance of modeling conditional capability utilities under subset-valued decisions.

Removing the two on-policy components (\textsc{w/o Set-On Policy} and \textsc{w/o Stage-On Policy}) causes complementary performance drops. Without prefix-level on-policy adaptation, the policy encounters mismatched partial capability sets during autoregressive construction, reducing decision consistency. Without stage-level on-policy rollout, the learned allocator suffers from state distribution shifts caused by previous allocation decisions, leading to lower task performance and demand adaptation accuracy. These results demonstrate the necessity of aligning both state and prefix distributions.

The removal of trajectory-level reinforcement learning (\textsc{w/o Traj. RL}) mainly increases switching cost, while only slightly affecting task-level metrics, showing that the trajectory objective primarily improves allocation stability by optimizing long-term activation behavior. Finally, removing the null anchor (\textsc{w/o Null-Anchor}) significantly degrades PPA and UGC, confirming that explicitly modeling the include-or-skip reference is critical for calibrating capability utility and avoiding biased activation decisions.

Overall, the ablation results validate that each component addresses a distinct challenge in combinatorial capability allocation, and their combination enables accurate, stable, and deployment-aligned capability selection.

\subsection{Efficiency}
Table~\ref{latency_qwen} compares the inference efficiency of different methods on GAIA. \textsc{CoCA} achieves substantial reductions in inference overhead across all deployment-related metrics, requiring only 6.5 LLM calls and 20.1k tokens per task, which are significantly lower than existing agentic baselines. This efficiency gain stems from learning a lightweight capability-subset policy that directly predicts required capabilities, avoiding repeated trial-and-error tool invocation or long reasoning trajectories used by existing methods.

Moreover, \textsc{CoCA} activates only 2.9 capabilities per stage on average, substantially fewer than routing-based baselines, while maintaining superior task performance. This demonstrates that \textsc{CoCA} does not achieve efficiency through aggressive capability removal, but through more accurate capability selection under conditional utility modeling. Although \textsc{CoCA} introduces additional training cost due to preference learning and on-policy optimization, this cost is incurred offline and enables a lightweight inference process without teacher queries or online updates. 

\section{Used Prompts}

\begin{promptbox}{Teacher Preference Comparison}

\textbf{You evaluate the relative preference between two candidate capabilities for a long-horizon multimodal agent.}

\vspace{0.6em}

Given the current state, the already selected capability subset, and two candidate capabilities, determine which candidate has higher cost-adjusted marginal utility under the current context.

\vspace{0.6em}

\textbf{Current State $s$:}\\
\{task goal, current observations, intermediate results, available context, and previous-stage capability outputs\}

\vspace{0.4em}

\textbf{Selected Capability Subset $S$:}\\
\{capabilities already selected at the current stage\}

\vspace{0.4em}

\textbf{Candidate Capability A:}\\
Name: \{capability name\}\\
Description: \{capability description\}\\
Activation Cost: \{normalized cost $c_A$\}

\vspace{0.4em}

\textbf{Candidate Capability B:}\\
Name: \{capability name\}\\
Description: \{capability description\}\\
Activation Cost: \{normalized cost $c_B$\}

\vspace{0.6em}

\textbf{Decision Criterion:}

Compare the candidates based on their cost-adjusted marginal utility:

\[
U_k(s,S)=A_{k,\mathrm{task}}(s,S)-\lambda c_k ,
\]

where $A_{k,\mathrm{task}}(s,S)$ denotes the expected contribution of activating capability $k$ under the current selected subset, and $c_k$ denotes its activation cost.

Consider:
\begin{itemize} 
    \item the additional information or functionality provided by the capability;
    \item the dependency or redundancy with capabilities already selected in $S$;
    \item the trade-off between expected task improvement and activation cost.
\end{itemize}

\vspace{0.6em}

\textbf{Output Labels:}

Return exactly one label:
\begin{itemize} 
    \item \texttt{1}: Candidate A has higher cost-adjusted marginal utility than Candidate B.
    \item \texttt{0}: Candidate B has higher cost-adjusted marginal utility than Candidate A.
    \item \texttt{0.5}: Candidate A and Candidate B have comparable utility.
\end{itemize}

\vspace{0.6em}

\textbf{Output Format:}

\begin{verbatim}
{
 "preference": 1 / 0 / 0.5
}
\end{verbatim}

\vspace{0.6em}

Evaluate only the marginal utility of the two candidates under the current state and selected subset. Do not consider future capability decisions.

\end{promptbox}

\begin{promptbox}{Capability Execution}

\textbf{You are a multimodal agent executor that completes tasks using an allocated set of capabilities.}

\vspace{0.6em}

Given the task context, selected capability set, and capability outputs, generate the final response by integrating the available information.

\vspace{0.6em}

\textbf{Task Input:}\\
\{user query, task goal, multimodal observations, and available context\}

\vspace{0.4em}

\textbf{Selected Capability Set $G_t$:}\\
\{activated capabilities at the current stage\}

\vspace{0.4em}

\textbf{Capability Outputs:}\\
\{outputs produced by the activated capabilities\}

\vspace{0.6em}

\textbf{Execution Requirements:}

\begin{itemize} 
    \item Use the outputs from the selected capabilities to solve the task.
    \item Follow the predefined dependency-aware execution order among capabilities.
    \item Integrate information from multiple capabilities when necessary.
    \item Do not invoke capabilities outside the allocated set.
\end{itemize}

\vspace{0.6em}

\textbf{Output Format:}

Return the final response that satisfies the task objective.

\end{promptbox}


\end{document}